\documentclass{article}
\usepackage{ijcai26}

\usepackage{times}
\usepackage{soul}
\usepackage{url}
\usepackage[hidelinks]{hyperref}
\usepackage[utf8]{inputenc}
\usepackage[small]{caption}
\usepackage{graphicx}
\usepackage{amsmath}
\usepackage{amsthm}
\usepackage{booktabs}
\usepackage{algorithm}
\usepackage{algorithmic}
\usepackage[switch]{lineno}

\usepackage{natbib}

\usepackage{graphicx} %
\usepackage{amsmath}
\usepackage{amsthm}
\usepackage{amssymb}
\usepackage{amsfonts}
\usepackage{nicefrac}
\usepackage{microtype}
\usepackage[dvipsnames, x11names]{xcolor}
\usepackage{booktabs} %
\usepackage{tabularx}
\usepackage{placeins}
\usepackage[pro]{fontawesome5}
\usepackage[inline]{enumitem}
\usepackage[hidelinks]{hyperref} %
\hypersetup{
    colorlinks=True,
    allcolors=Blue,
    }
\usepackage[capitalize]{cleveref} %
\usepackage{xspace}
\usepackage{csquotes}
\usepackage{tikz}
\usetikzlibrary{arrows, automata, positioning}
\usepackage{mdframed}

\usepackage{comment}

\usepackage{silence}
\usepackage{scalerel}  %
\usepackage{subcaption}
\renewcommand{\thesubfigure}{~(\text{%
  \ifnum\value{subfigure}=1 left\else right\fi})}

\usepackage{thm-restate}

\usepackage{mathtools}

\usepackage[most]{tcolorbox}
\newtcolorbox{leftvrule}[1][]{colback=white,  boxrule=0pt, boxsep=0pt, breakable, enhanced jigsaw, borderline west={1.5pt}{0pt}{black},
before skip=5pt,after skip=5pt,
#1}

\Crefname{equation}{Equation}{Equations}
\crefname{equation}{Equation}{Equations}
\Crefname{figure}{Figure}{Figures}
\crefname{figure}{Figure}{Figures}
\Crefname{remark}{Remark}{Remarks}
\crefname{remark}{Remark}{Remarks}
\Crefname{section}{Section}{Sections}
\crefname{section}{Section}{Sections}
\Crefname{subsection}{Section}{Sections}
\crefname{subsection}{Section}{Sections}
\Crefname{appendix}{Appendix}{Appendices}
\crefname{appendix}{Appendix}{Appendices}
\Crefname{example}{Example}{Examples}
\crefname{example}{Example}{Examples}
\Crefname{definition}{Definition}{Definitions}
\crefname{definition}{Definition}{Definitions}
\Crefname{lemma}{Lemma}{Lemmata}
\crefname{lemma}{Lemma}{Lemmata}
\Crefname{theorem}{Theorem}{Theorems}
\crefname{theorem}{Theorem}{Theorems}

\theoremstyle{definition} %

\newtheorem{definition}{Definition}
\newtheorem{remark}{Remark}
\newtheorem{theorem}{Theorem}
\newtheorem{proposition}{Proposition}
\newtheorem{lemma}{Lemma}

\newtheorem{example}{Example}

\newcommand{\abs}[1]{\lvert #1 \rvert}

\newcommand{\mathsymbol}[2]{ \newcommand{#1}{\ensuremath{#2}\xspace} }

\newcommand{\pr}{\mathbb{P}}
\newcommand{\supp}{\text{supp}}

\newcommand{\T}[0]{{T}}
\newcommand{\mis}[0]{\bot}
\newcommand{\Dist}[0]{\Delta}
\mathsymbol{\mdp}{\mathcal{P}}
\mathsymbol{\mdphat}{\MDPhat}
\mathsymbol{\dataset}{\mathcal{D}}
\mathsymbol{\datasetsize}{|\dataset|}
\newcommand{\init}{b_0}%
\newcommand{\Rfunc}{f_R}

\newcommand{\Z}{Z}    
\newcommand{\Zmiss}{\Z}
\newcommand{\reward}{\varrho}
\newcommand{\misstup}{\ensuremath{(S, A, \T, \init, \reward, Z, M, \gamma)}}
\newcommand{\beliefset}{\mathcal{B}}
\newcommand{\smallextra}{\kappa}
\newcommand{\indicatorfunction}{\boldsymbol{1}}

\newcommand{\scalesymbol}[2]{\mathchoice
  {\scalebox{#1}{$\displaystyle{#2}$}}
  {\scalebox{#1}{$\textstyle{#2}$}}
  {\scalebox{#1}{$\scriptstyle{#2}$}}
  {\scalebox{#1}{$\scriptscriptstyle{#2}$}}
}

\newcommand{\Mhat}{\scalesymbol{0.9}{\widehat{M}}}
\newcommand{\MDPhat}{\scalesymbol{0.85}{\widehat{\mdp}}}

\newcommand{\Ialw}{\scalesymbol{0.9}{I_\text{always}}}
\newcommand{\Imis}{\scalesymbol{0.9}{I_\text{mis}}}
\newcommand{\Ihatalw}{\scalesymbol{0.9}{\hat{I}_\text{always}}}
\newcommand{\Ihatmis}{\scalesymbol{0.9}{\hat{I}_\text{mis}}}

\mathsymbol{\sta}{x}
\mathsymbol{\stb}{y}

\newcommand{\rand}[1]{\boldsymbol{#1}}

\mathsymbol{\traj}{\upsilon}
\mathsymbol{\trajs}{\Upsilon}
\mathsymbol{\histories}{\mathcal{H}}

\mathsymbol{\sie}{\mathrm{E}}

\definecolor{ColorSm}{HTML}{9773A6}
\definecolor{ColorSo}{HTML}{81B367}
\definecolor{ColorR}{HTML}{6C8EBF}
\definecolor{ColorZ}{HTML}{D6B656}

\newcommand{\Node}[2]{
    \tikz[scale=0.5, transform shape,  baseline={(0,-0.75ex)}]{%
        \filldraw[fill=white, draw=black, line width=0.2mm] (0,0) circle (1.9ex);
        \node at (0,0) {#2};
    }
}

\newcommand{\Znode}[0]{\!\Node{ColorZ}{$Z$}\!}
\newcommand{\Rnode}[0]{\!\Node{ColorR}{$R$}\!}
\newcommand{\Snode}[0]{\!\Node{white}{$S$}\!}

\newcommand{\eps}{\ensuremath{\varepsilon}}

\usepackage{xargs}
\usepackage{silence}

\AtBeginDocument{\renewcommand{\ref}[1]{\PackageError{goodpractices}{do not use ref, replace it by cref}{do not use ref, replace it by cref}}}

\mathsymbol{\piopt}{\pi^{*}}
\mathsymbol{\pirand}{\pi^{\text{rand}}}
\mathsymbol{\pihat}{\hat{\pi}}
\mathsymbol{\Mguess}{\text{prior}}
\mathsymbol{\piguess}{\pi^{\Mguess}}
\mathsymbol{\piPPO}{\pi^{\text{PPO}}}
\mathsymbol{\piPPOmemory}{\pi^{\text{PPO+Memory}}}
\mathsymbol{\piPOMCP}{\pi^{\text{POMCP}}}

\newcommand{\missMDP}{miss-MDP\xspace}
\newcommand{\MissMDP}{Miss-MDP\xspace}
\newcommand{\missMDPs}{miss-MDPs\xspace}
\newcommand{\MissMDPs}{Miss-MDPs\xspace}

\newcommand{\epsA}{\varepsilon_{M}}
\newcommand{\epsPi}{\varepsilon_{\pi}}
\newcommand{\epsG}{\varepsilon_{\gamma}}

\newcommand{\alg}[1]{\texttt{#1}\xspace}
\newcommand{\environment}[1]{\emph{#1}\xspace}
\newcommand{\process}[1]{\emph{#1}\xspace}
\newcommand{\envprocess}[2]{{$#1_{#2}$}\xspace}

\newcommand{\AMCAR}{\alg{AMCAR}}
\newcommand{\AsMAR}{\alg{AsMAR}}
\newcommand{\AIMI}{\alg{AIMI}}
\newcommand{\ICU}{\environment{ICU}}
\newcommand{\Predator}{\environment{Predator}}
\newcommand{\MCAR}{\process{MCAR}}
\newcommand{\simpleMAR}{\process{sMAR}}
\newcommand{\MNARid}{\process{MNAR (id.)}}
\newcommand{\MNARunid}{\process{MNAR (unid.)}}

\newcommand{\PredatorMCAR}{\envprocess{\Predator}{\MCAR}}
\newcommand{\PredatorSimpleMAR}{\envprocess{\Predator}{\simpleMAR}}
\newcommand{\PredatorMNARunid}{\envprocess{\Predator}{\MNARunid}}

\newcommand{\ICUSimpleMAR}{\envprocess{\ICU}{\simpleMAR}}
\newcommand{\ICUMNARid}{\envprocess{\ICU}{\MNARid}}
\newcommand{\ICUMNARunid}{\envprocess{\ICU}{\MNARunid}}

\newlist{questionenum}{enumerate}{1}
\setlist[questionenum]{label=\textbf{Q\arabic*.}, ref=Q\arabic*, leftmargin=2\parindent, nosep}
\Crefname{question}{}{}
\crefname{question}{}{}
\crefalias{questionenumi}{question}

\newcommand{\topleftframe}[3][0.6pt]{%
  \begin{tikzpicture}[baseline=(img.base)]
    \node[inner sep=0pt, outer sep=0pt, anchor=base] (img) {#3};
    \begin{scope}[overlay]
      \draw[line width=#1, draw=#2] (img.north west) -- (img.north east);
      \draw[line width=#1, draw=#2] (img.north west) -- (img.south west);
    \end{scope}
  \end{tikzpicture}%
}

\newcommand{\rebuttal}[1]{#1}

\newif\ifseparate
\newif\ifappendix

\separatefalse
\appendixfalse

\ifdefined\buildmain
  \separatetrue
  \appendixfalse
\fi
\ifdefined\buildappendix
  \separatetrue
  \appendixtrue
\fi

\ifseparate
  \usepackage{xr}            %
\fi

\title{Missingness-MDPs: Bridging the Theory of Missing Data and POMDPs}

\author{
Joshua Wendland$^1$
\and
Markel Zubia$^1$\and
Roman Andriushchenko$^{2}$\and
Maris F. L. Galesloot$^3$\and\\
Milan Ceska$^2$\and
Henrik von Kleist$^4$\and
Thiago D. Sim\~ao$^5$\and
Maximilian Weininger$^1$\And
Nils Jansen$^{1,3}$
\\
\affiliations
$^1$Ruhr University Bochum\\
$^2$Brno University of Technology\\
$^3$Radboud University Nijmegen\\
$^4$Harvard University\\
$^5$Eindhoven University of Technology\\
\emails
\{joshua.wendland, markel.zubia\}@ruhr-uni-bochum.de
}

\begin{document}

\maketitle

\begin{abstract}
We introduce \emph{missingness-MDPs}~(\missMDPs), 
a novel subclass of partially observable Markov decision processes~(POMDPs) that incorporates the theory of missing data.
A \missMDP is a POMDP whose observation function is a \emph{missingness function}, specifying the probability that individual state features are \emph{missing} (i.e.,  unobserved) at a time step.
The literature distinguishes three canonical missingness types: missing (1) completely at random (MCAR), (2) at random (MAR), and (3) not at random (MNAR).
Our planning problem is to compute near-optimal policies for a \missMDP with an \emph{unknown} missingness function, given a dataset of action–observation trajectories.
Achieving such optimality guarantees for policies requires learning the missingness function from data, which is infeasible for general POMDPs.
To overcome this challenge, we exploit the structural properties of different missingness types to derive probably approximately correct (PAC) algorithms for learning the missingness function.
These algorithms yield an approximate but fully specified \missMDP that we solve using off-the-shelf planning methods.
We prove that, with high probability, the resulting policies are 
$\varepsilon$-optimal in the true \missMDP.
Empirical results confirm the theory and demonstrate superior performance of our approach %
over two model-free POMDPs methods.
\end{abstract}

\section{Introduction}

Markov decision processes \citep[MDPs;][]{DBLP:books/wi/Puterman94} capture sequential decision-making under uncertainty. 
A typical assumption is that all \emph{state features} are fully observable at all times.
In practice, however, features may be \emph{missing}, e.g.\ due to sensor failures, 
so decisions must be made from incomplete information.
For instance, a medical doctor may diagnose a patient based on recorded measurements such as heart rate and temperature that are potentially incomplete.

Our problem is to compute optimal policies for MDPs with dynamically missing state features.
While the MDP structure and historical data are given, the process governing missingness is unknown.
Partially observable Markov decision processes \citep[POMDPs;][]{astrom1965optimal} naturally capture this setting, with an observation function specifying the probability of receiving partial state information, of which missing features are a special case.
However, the observation function is unknown and must, in principle, be learned from action–observation histories.
Yet, learning the observation function of a POMDP is notoriously challenging: 
In general, it is unidentifiable from histories alone~\citep{allman2009identifiability,finesso1990consistent,gilbert1959identifiability,tune2013hidden}, 
and even without requiring identifiability (i.e.\ merely seeking a model consistent with the data), learning remains both statistically~\citep{krishnamurthy2016pac,xiong2022sublinear,golowich2022learning} and computationally intractable~\citep{terwijn2002learnability,mossel2005learning}.

Our core contribution is exploiting the theory of \emph{missing data} \citep{schafer_missing_2002,buuren_flexible_2018,little_statistical_2019} to define suitable identifiability assumptions for POMDPs. 
We introduce  \emph{missingness-MDPs} (\missMDPs) as a proper subclass of POMDPs.
In \missMDPs, the observation function is a \emph{missingness function} that generates either perfect or missing information for each state feature.
We classify this function as \emph{missing completely at random}~(MCAR), \emph{missing at random}~(MAR), or \emph{missing not at random}~(MNAR), following~\citet{rubin_inference_1976}, detailed in \cref{sec:missingness}.

We refine our problem statement: 
Given a \missMDP with an \emph{unknown} missingness function and a dataset of actions and observations, the goal is to compute a policy 
that maximizes the expected reward.
We assume that the underlying MDP and, thus, the transition function are known.
In the doctor-patient example, the transition function captures the change in the patient's health status \citep{komorowski_artificial_2018}, see \cref{fig:missingness-MDP-visualization}.
Similarly, in a robot navigation task, the probability of transitioning to a subsequent state may be known, while sensor failures can still introduce missingness and, therefore, partial observability.
To obtain guarantees on the result, we approximate the unknown missingness function
from the dataset, and thereby the original \missMDP.
For this approximate \missMDP, we compute an approximately optimal policy through off-the-shelf POMDP solvers such as SARSOP~\citep{kurniawati2008sarsop}.
Missingness functions are \emph{not learnable in general}~\citep{bhattacharya_identification_2020},
yet we identify and, subsequently, focus on missingness functions that are tractable to learn.

In summary, our contributions are:
\begin{enumerate}[topsep=2pt, itemsep=1.5pt, parsep=1.5pt, partopsep=0.5pt, leftmargin=20pt, labelsep=0.5em, labelindent=0pt, labelwidth=!]
    \item We introduce \missMDPs, which integrate and define the semantics of missing data in a proper subclass of the more general POMDP framework (\cref{sec:missingness}).
    \item We prove that state beliefs do not always depend on the missingness function's probabilities (\cref{rem:ignorability}), similar to \emph{ignorability} of missing data~\citep{little_statistical_2019}.
    \item We present \emph{probably approximately correct}~(PAC) learning algorithms for tractable subclasses of the three missingness types.
    (\cref{sec:algorithm-AIMI,sec:algoritm-AMCAR-AsMAR}).
    \item We prove that by using these algorithms, we approximate the $\eps$-optimal policy for the \missMDP under the correct assumption on the missingness function (\cref{sec:alg-policy}).
\end{enumerate}
Our empirical evaluation (\cref{sec:exp}) confirms our theoretical results and highlights the practical advantages of our approach: 
(1) When using datasets of moderate size to learn the missingness function, the performance of the resulting policies converges to that of the optimal policy; and
(2) our policies show superior performance in comparison to POMCP~\citep{silver2010monte} and PPO~\citep{DBLP:journals/corr/SchulmanWDRK17}, which use the direct POMDP formulation of the missingness problem.

\begin{figure}[tb]
    \centering
    \includegraphics[width=\linewidth, clip, trim= 30 7 50 8]{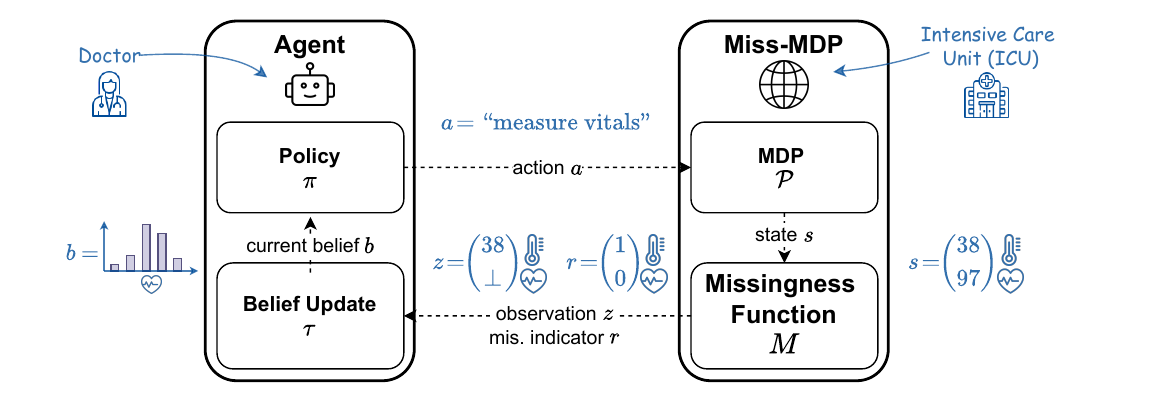}
    \vspace{-2ex}
    \definecolor{exampleblue}{HTML}{326FAD}
    \caption{
    A doctor-treating-patient example (\textcolor{exampleblue}{blue annotations}) of an agent interacting with a \missMDP.
    The missingness function causes the heart rate feature of the state to go missing, indicated as $\mis$ in the observation. The missingness indicator evaluates to 0 for missing features and to 1 otherwise.}
    \label{fig:missingness-MDP-visualization}
\end{figure}

\subsection*{Related work}

Our work builds on a rich literature in missing data analysis, see e.g. \citep{anastasios_a_semiparametric_2006,little_statistical_2019}. 
Classical assumptions such as MCAR, MAR, and MNAR provide high-level categories. 
More refined tools, such as missingness graphs, allow one to encode assumptions about the missingness in a structured way \citep{mohan_graphical_2013, shpitser_missing_2015}, leading to highly specific learnability results~\citep{bhattacharya_identification_2020, nabi_full_2020}.
Our setting departs from the standard missing data paradigm in several important aspects. 
In particular, the concept of missingness is embedded in the setting of solving POMDPs, enabling a more principled understanding of missingness in the context of sequential decision-making under uncertainty.

Prior work on MDPs with missing observations has focused mainly on reinforcement learning, where missing data is treated as incidental rather than explicitly modeled.
Both full observations~\citep{DBLP:conf/nips/Chen0PW23} or individual features may be missing~\citep{shim_joint_2018, yoon_asac_2019, bock_superhuman_2022}.
Some approaches pre-process missingness in observations for RL agents~\citep{wang_robust_2019}, while others adopt model-based methods, often restricted to simpler settings such as MCAR~\citep{futoma_popcorn_2020}.
Planning approaches typically neglect distinctions between MCAR, MAR, and MNAR \citep{liu_machine_2022,DBLP:conf/ic-nc/YamaguchiFO20,futoma_popcorn_2020}.
Another line of work combines deep learning with POMDP solvers, but without explicitly modeling the missingness process~\citep{liu_machine_2022}.
More principled \emph{imputation strategies}—such as Bayesian multiple imputation~\citep{lizotte2008missing} and expectation-maximization~\citep{DBLP:conf/ic-nc/YamaguchiFO20}—estimate a distribution over missing values.
In contrast to imputation, our approach directly learns the missingness function and provides PAC guarantees on the resulting policy.
To our knowledge, no existing work bridges missingness and POMDPs to (1) learn the missingness function with statistical guarantees, and (2) leverage it to guarantee the optimality of the resulting policies.

\section{Preliminaries}
\label{sec:preliminaries}
A \emph{probability distribution over a finite
set $X$} is a function $\mu \colon X \to [0, 1]$ with $\sum_{x \in X} \mu(x) = 1$.
The set of such distributions is $\Dist(X)$.
Writing $\mu = \{x_1 \mapsto p_1, \dots, x_k \mapsto p_k\}$ indicates that $\mu(x_1) = p_1$ and so on.
The \emph{support} of distribution $\mu \in \Dist(X)$ is $\supp(\mu) = \{x \in X \mid \mu(x) \neq 0\}$.
Sampling a random variable $\rand{x}$ from $\mu$ is denoted by $\rand{x} \sim \mu$.
Given $\sigma\colon X \to \Dist(Y)$, we let $\sigma(y \mid x) := \sigma(x)(y)$.
The indicator function $\rand{1}_\varphi$ returns $1$ if predicate $\varphi$ holds and $0$~otherwise.

\begin{definition}[POMDPs]
    A \emph{partially observable Markov decision process} is a tuple $\mdp = (S, A, \T, \init, \reward, Z, O, \gamma)$
    with finite factored \emph{state space} $S = \times_{i = 1, \dots, n} \, S_i$  and the set of \emph{feature indices} $I = \{1,\ldots,n\}$,  finite \emph{action space} $A$, \emph{transition function} $\T \colon S \times A \to \Dist(S)$,
    \emph{initial state distribution} $\init \in \Dist(S)$,  \emph{reward function} $\reward \colon S \times A \to \mathbb{R}$,  finite factored \emph{observation space} $Z = \times_{i = 1, \dots m} Z_i$, \emph{observation function} $O \colon S \to \Dist(Z)$,
    and \emph{discount factor} $\gamma \in [0, 1)$.
\end{definition}

The observation function is action-independent without loss of generality, as we may augment the state space to incorporate the last performed action \citep{ChatterjeeCGK16}.

A \emph{trajectory} in a POMDP $\mdp$
is a sequence of states, observations, and actions.
A \emph{history} $h = \bigl(z^{(0)},a^{(0)}, z^{(1)},a^{(1)},\dots\bigr) \in \histories \subseteq (Z \times A)^*$ is the observable fragment of a trajectory, i.e., a sequence of observations and actions.
A history can be summarized by a \emph{sufficient statistic} known as a \emph{belief} $b\in\beliefset\subseteq \Dist(S)$, a probability distribution over underlying states induced by a history $h\in\histories$.
The \emph{belief update} $\tau \colon \beliefset \times A \times \Z \to \beliefset$ computes a \emph{successor belief}~$b'$ 
via Bayes' rule~\citep{Spaan2012}.

A \emph{policy} $\pi \colon \beliefset \to \Dist(A) \in \Pi$ maps beliefs to probability distributions over actions.
The \emph{objective} is to 
find a policy \(\pi\in\Pi\) that 
maximizes its \emph{value}, representing the infinite-horizon expected cumulative discounted reward:
\(
V_{\mdp}(\pi) = \mathbb{E}^\pi\left[ \sum_{t=0}^\infty \gamma^t \reward(s^{(t)}, a^{(t)}) \right].
\)
As the decision problem of finding the optimal policy is undecidable in general~\citep{DBLP:journals/ai/MadaniHC03},
we focus on \eps-optimal policies~\citep{hauskrecht2000value}.

\section{Missingness in MDPs}\label{sec:missingness}

This section introduces \emph{missingness-MDPs}, the different types of missingness functions and ways to visualize them.

\begin{definition}[\MissMDP]
    A missingness-MDP is a tuple $\mdp =\misstup$, where 
    $S$, $A$, $\T$, $\init$, $\reward$, and $\gamma$ are as in a POMDP, 
    the finite \emph{observation space} is $\Zmiss = \times_{i \in I} (S_i \cup \{\mis\})$, with $\mis$ denoting \emph{missing information},
    and function $M\colon S \rightarrow \Dist(\Zmiss)$ is the \textit{missingness function} such that $\forall s \in S, \forall z \in \supp(M(s)), \forall i \in I$ either $z_i= s_i$ or $z_i = \mis$.
\end{definition}
The state space $S$ and observation space $\Zmiss$ share the feature indices~$I$, and we have that $\Zmiss \supsetneq S$ as some features can go \emph{missing} in $\Zmiss$, being replaced by the symbol $\mis$.
This process of \enquote{poking holes} is governed by the stochastic missingness function~$M$.
While $M$ may take actions into account, we use an action-independent $M$ w.l.o.g (see \cref{sec:preliminaries}).

\paragraph{Missingness indicators.} \label{sec:missingness-indicator-definition}
Missingness functions can equivalently be described as a map to vectors of \emph{missingness indicators}~\citep{mohan_graphical_2013}, i.e.\ $M\colon S\to \Delta(R)$, where $R = \{0,1\}^n$.
A vector $r \in R$ has $r_i = 0$ if feature $i$ is missing ($z_i=\mis$), and otherwise  $r_i=1$.
The function $\Rfunc \colon Z \rightarrow R$ maps observations to their missingness indicators.

\begin{example}
\label{ex:missing-pomdp}
    Let $\mdp$ be a \missMDP with $S = \{\sta,\stb\}^2$, $\Zmiss = \{\sta,\stb,\mis\}^2$,
    and missingness function~$M$ defined as:
    $M((s_1, s_2)) = \{(s_1, s_2) \mapsto 0.5, (s_1, \bot) \mapsto 0.5\}$.
    We have $\Rfunc((\stb,\sta)) = (1,1)$ and $\Rfunc((\stb,\bot)) = (1,0)$.
    Visiting state $(\stb,\sta)$ yields either $(\stb,\sta)$ or $(\stb,\bot)$, each with probability~0.5.
\end{example}

\begin{mdframed}[innertopmargin=4pt,
    innerbottommargin=4pt, nobreak=true]
\paragraph{Problem statement.}
We are given a \missMDP $\mdp$ with an \emph{unknown} missingness function $M$, a required precision $\eps > 0$ and confidence threshold $\delta > 0$, and a dataset $\dataset = (h_1,\dots,h_k)$ of 
$k$ histories $h_i \in \histories$ collected from $\mdp$ under an unknown but fair 
policy $\pi_b$ (i.e.\ it has positive probability to visit all reachable states).
The goal is to (1) approximate the missingness function $\Mhat\approx M$ for all reachable states and (2) use it to compute a policy $\pi^*\in \Pi$ such that with probability at least $\delta$, we have
$\sup_{\pi} (V_{\mdp}(\pi)) - V_{{\mdp}}(\pi^*) \leq \varepsilon$.
\end{mdframed}
The resulting policy $\pi^*$ comes with a PAC guarantee~\citep{valiant1984theory,weininger_PAC_CAV}: In \emph{finite time} and with probability at least $\delta$, we return a policy that is $\eps$-optimal.

\paragraph{Advantages of \missMDPs over POMDPs.}
For computing $\varepsilon$-optimal policies in POMDPs, knowledge of the observation function is required.
However, for arbitrary POMDPs, learning the observation function from a dataset of histories is impossible~\citep{gilbert1959identifiability, tune2013hidden}. 
In contrast, \missMDPs exhibit structure in the missingness function~$M$ that, for certain types of missingness function, allows for learning $M$ from histories (see also the paragraph \enquote{Missingness types in focus} in \cref{sec:algos}).
Consequently, \missMDPs are a \emph{proper and more feasible subclass} of POMDPs.

\subsection{Types of missingness functions}
There are three major types of missingness functions~\citep{rubin_inference_1976}: 
\emph{missing completely at random} (MCAR), \emph{missing at random} (MAR) and \emph{missing not at random} (MNAR).
We define these in the context of \missMDPs. %
This way, we connect the theory of missing data to POMDPs, which allows us to identify cases in which $M$ can be learned from histories.

The simplest type is MCAR, where the probability of features going missing is independent of the \emph{feature values} of the state.
In the doctor-diagnosing example (\cref{fig:missingness-MDP-visualization}), the temperature feature may be missing due to a loosely attached thermometer.
The \missMDP in \cref{ex:missing-pomdp} is MCAR.
\begin{definition}[\textbf{MCAR}]
    The missingness function $M \colon S \rightarrow \Dist(Z)$ of a \missMDP~$\mdp$ is MCAR if and only if
    $\forall r \in R$, $\exists p_r \in [0, 1]$, $\forall s \in S$,
    $\pr\left(\Rfunc(\rand z) = r \mid \rand z \sim M(s)\right) = p_r$. 
\end{definition}

\paragraph{Admittability and $\Ialw$.}
We introduce a notion of admittability that indicates whether an observation $z$ could originate from a state $s$.
We say that $z$ is \emph{admittable} by $s$, denoted $z \preceq s$, if and only if $\forall i \in I$, $z_i = \bot$ or $z_i = s_i$.
In \cref{ex:missing-pomdp}, we have $(b,\mis) \preceq (b,a)$ and $(b,a) \preceq (b,a)$ but $(a,\mis) \npreceq (b,a)$.
Further, $\Ialw = \{i \in I \mid \forall s' \in S\colon \pr(\rand z_i = \bot \mid \rand z \sim M(s')) = 0\}\subseteq I$ is the set of indices of always observed features, and $\Imis = I \setminus \Ialw$ is its complement.

Missingness probabilities in MAR depend only on observed state features -- \rebuttal{in the example of}~\cref{fig:missingness-MDP-visualization}, the observed temperature influences the missingness of the heart rate feature.
We distinguish two MAR variants: a restricted one we call \emph{simple MAR}~\citep{Mohan_graphical_2021}, and the general one~\citep{rubin_inference_1976}.
For simple MAR, a missingness probability is only influenced by the observable features that \emph{never go missing}, i.e., by $z_i$ for $i \in \Ialw$.
For MAR, a missingness probability is only influenced by the non-missing features of a given \emph{observation}, including features that may go missing.
Any MCAR missingness function is also (simple) MAR.

\begin{definition}[\textbf{(Simple) MAR}]
    The missingness function $M \colon S \rightarrow \Dist(Z)$ of a \missMDP~$\mdp$ is:
    \begin{itemize}[topsep=0pt, itemsep=1.5pt, parsep=1.5pt, partopsep=0.5pt, leftmargin=12pt, labelsep=0.5em, labelindent=0pt, labelwidth=!]
    \item \textbf{Simple MAR} iff
    for all $s, s' \,{\in}\, S$ that agree on always-observed features (i.e.\ $\forall i \,{\in}\, \Ialw$, $s_i \,{=}\, s'_i$), the missingness probability is the same for all missingness indicators $r \,{\in}\, R$, formally:
    \mbox{$\pr(\Rfunc(\rand z) \!=\! r \!\mid\! \rand z\! \sim \! M(s)) \!=\!$ 
    $\pr(\Rfunc(\rand z') \!=\! r \!\mid\! \rand z' \!\sim\! M(s'))$.} 
    \item \textbf{MAR} iff for all $s, s' \in S$ and $z \in Z$, 
    if $z \preceq s, s'$, the probability of its indicator $r \coloneqq \Rfunc(z)$ is equal for $s$ and $s'$:
    \mbox{$\pr(\Rfunc(\rand{z}') \!=\! r \!\mid\! \rand{z}' \!\sim\! M(s))$} 
    {$\!=\! \pr(\Rfunc(\rand{z}'') \!=\! r \!\mid\! \rand{z}'' \!\sim\! M(s')).$}
    \end{itemize}
\end{definition}

\begin{example}\label{ex:MAR}
    We redefine $M$ in the \missMDP from \cref{ex:missing-pomdp} to be simple MAR:
    $M((s_1,\sta)) = \{ (s_1,\sta) \mapsto 1 \}$, and $M((s_1,\stb)) = \{ (s_1,\stb) \mapsto 0.5, (\mis,\stb) \mapsto 0.5 \}$.
    Here, the missingness probability of feature 1 depends on the \emph{always} observed value of feature 2.
    As an example of MAR, which is not simple MAR, consider:
    $M((s_1,\sta)) = \{ (s_1,\sta) \mapsto 0.5, (\bot,\bot) \mapsto 0.5 \}$, and $M((s_1,\stb)) = \{ (s_1,\stb) \mapsto 0.25, (\mis,\stb) \mapsto 0.25, (\bot,\bot) \mapsto 0.5 \}$.
    Here, feature 2 may go missing as well. 
    The missingness probability of feature 1 depends on the value of feature 2, only when it is \emph{observed}!
\end{example}

\begin{definition}[\textbf{MNAR}]
    The missingness function $M$ of a \missMDP~$\mdp$ is MNAR if and only if it is not MAR.
\end{definition}
For MNAR, missingness probabilities may depend on the values of missing features -- e.g., in~\cref{fig:missingness-MDP-visualization} the temperature feature influences its own missingness. 
In particular, in \emph{self-censoring} missingness functions, a feature's missingness probability depends on its own value.

\begin{example}\label{ex:MNAR}
    We adapt \cref{ex:missing-pomdp} to make $M$ MNAR and self-censoring for feature 2:
    $M((s_1,\sta)) = \{ (s_1,\sta) \mapsto 0.5, (s_1, \mis) \mapsto 0.5 \}$ and 
    $M((s_1,\stb)) = \{ (s_1,\stb) \mapsto 0.1, (s_1, \mis) \mapsto 0.9 \}$.
\end{example}

\begin{figure}[tb]
    \centering
        \includegraphics[width=0.95\linewidth]{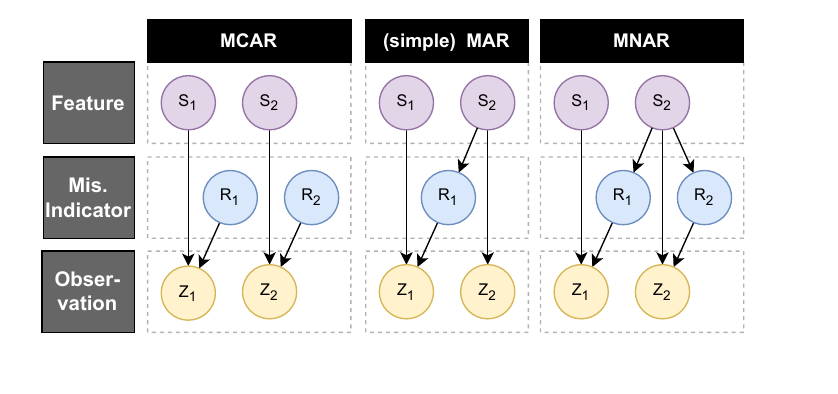}
    \vspace{-1em}
    \caption{Example missingness graphs visualizing relations between \missMDP elements for the three types of missingness functions.}
    \label{fig:missingness-graph-app-main}
\end{figure}

\subsection{Missingness graphs}
\label{sec:missingness-graphs}

\emph{Missingness graphs} (m-graphs) visualize the dependencies of missingness functions.
We adopt the definition from \citet{Mohan_graphical_2021} for our \missMDP framework.
An m-graph is a causal diagram \citep{pearl_causal_diagrams_1995} in the form of a directed acyclic graph. %
Its vertices correspond to variables, and the edges correspond to the relationships between the variables.

Vertices are grouped into three categories:
\mbox{\Snode-nodes} represent
features of the state space, 
\Znode-nodes represent the observation features, and
\Rnode-nodes represent the missingness indicators.%
\footnote{We exclude the category of unobserved features $U$ used in \citet{Mohan_graphical_2021}, as in our setting $U=\emptyset$, since $M$ depends on states.}
We omit the {\Rnode-nodes} for always observed features.
Arrows between nodes represent a causal relationship: The parent node is a direct cause of the child node. 
The absence of an edge intuitively indicates that two variables do not directly influence each other; formally, it means they are conditionally independent, given other variables in the graph, according to the d-separation criteria \citep{pearl_causality_2009}.

\paragraph{Visualizing types of missingness.}
\cref{fig:missingness-graph-app-main} uses m-graphs to illustrate the conditional independence assumptions of different types of missingness functions. %
For MCAR, both \Rnode-nodes are purely stochastic, having no incoming arrows and thus not depending on any feature value.
For (simple) MAR, there are two changes: 
Feature $S_2$ is always observable ($R_2$ is absent), and it affects missingness indicator $R_1$ (red arrow). 
For MNAR, $S_2$ can go missing, so $R_1$ depends on information that can go missing.
Note that m-graphs cannot represent \emph{context-specific} independence assumptions, which are needed to, e.g., represent non-simple MAR functions such as the one in \cref{ex:MAR}; but the missingness functions we focus on may all be represented by m-graphs.
We provide the corresponding m-graphs for all experiments in \cref{app:benchmarks}.

\section{Approximating missingness-MDPs}\label{sec:algos}
Our goal is to compute $\eps$-optimal policies for  \missMDPs with an unknown $M$.
For this, we first compute an approximation $\Mhat\approx M$ from the given dataset $\dataset$ of histories.
This yields an approximated, but fully specified \missMDP $\MDPhat$, which can be solved using any off-the-shelf POMDP solver.
Recall that in general, it's impossible to approximate unknown observation functions in POMDPs.
Thus, we identify \missMDPs where approximating $M$ with guarantees is possible.

\paragraph{Missingness types in focus.}
A necessary condition is that the missingness function can be approximated solely from observations, a property that missing data literature calls \emph{identifiability}~\citep{bhattacharya_identification_2020}.
Establishing new identifiability results is not the focus of this paper.
Instead, we provide PAC guarantees for types that are known to be identifiable.
Thus, we focus on missingness functions of type
(1)~MCAR,
(2)~simple MAR and 
(3)~non-self-censoring MNAR with independent missingness indicators.
Additionally, in \cref{sec:exp}, we experiment on non-identifiable MNAR.

\paragraph{Outline.}
\cref{rem:ignorability} presents an interesting insight orthogonal to our problem:
For maintaining a belief during policy execution, certain types of missingness can in fact be \emph{ignored}.
\cref{sec:algoritm-AMCAR-AsMAR,sec:algorithm-AIMI} describe our algorithms for approximating missingness functions.
Both are structured as follows: 
They state assumptions, define how to compute $\Mhat$, prove that the approximation is probably approximately correct, and explain how to utilize additional knowledge on the missingness function to reduce sample complexity.
\cref{sec:alg-policy} uses these algorithms to compute near-optimal policies.

\begin{remark}[Ignorability]\label{rem:ignorability}
Missing data literature defines \emph{ignorability} as cases where any quantity of interest can be consistently estimated from observations alone, and it is not necessary to model the missingness process~\citep{little_statistical_2019}. 
This holds under MCAR, and also under MAR whenever the quantity depends only on the observed features.  
We identify a similar notion of ignorability for \missMDPs: 
If the missingness function $M$ is MAR (including MCAR), then belief updates $\tau$ can be computed without knowledge of the precise probabilities of $M$, since these cancel out in Bayes' rule; see \cref{app:belief-oblivious} for a formal proof.
Thus, MAR missingness is ignorable for maintaining a belief when executing a policy in a \missMDP.
However, we stress that the missingness function \emph{is} required to compute belief-based policies, since the probabilities $\pr(b'\mid b,a)$ of successor beliefs $b'$ depend on it.
\end{remark}

\paragraph{Occurrence counts.}
Both algorithms extract the number of occurrences of every observation using the dataset $\dataset = (h_1,\dots,h_k)$ of 
$k$ histories $h_i \in \histories$.
For each \mbox{$h_i = \bigl(z^{(0)},\,a^{(0)},\,\,\dots, z^{(l)},\,a^{(l)}\bigr)$}, 
we denote the $j$-th observation $z^{(j)}$ by $h_i^{(j)}$. 
The number of occurrences of an observation $z \in Z$ is: 
$\#_{\dataset}(z) = \sum_{i=1}^k \sum_{j=0}^{\lvert h_i \rvert} \, \rand{1}_{h_i^{(j)} = z}$.
For a set $Z' \subseteq Z$, we define $\#_{\dataset}\bigl( Z'\bigr) = \sum_{z\in Z'} \#_\dataset(z)$.

\subsection{Approximating M for MCAR and simple MAR}
\label{sec:algoritm-AMCAR-AsMAR}

If a missingness function is of type simple MAR, we can approximate it using the \emph{approximation for simple MAR} algorithm, \AsMAR.
The modifications to obtain the algorithm for the more restricted MCAR-type functions, \AMCAR, are described at the end of the section.

\paragraph{Always-observable features.} 
Based on the dataset $\dataset$, we partition the feature indices $I$ into those that are always observed and those that can go missing:
\mbox{$\Ihatalw = \{ i \in I \mid \#_\dataset(\{z \in Z \mid z_i = \mis\}) = 0\}$}
and $\Ihatmis= I \setminus \Ihatalw$, respectively. 
Note, this partitioning is based on empirical data ($\Ihatalw\approx\Ialw$) and we might misclassify a feature index to be in $\Ihatalw$ even though it can go missing.

\paragraph{Computing $\Mhat$.}
We use the fact that $M$ can be seen as a mapping $S \rightarrow \Dist(R)$ (see paragraph  \enquote{Missingness indicators}, \cref{sec:missingness-indicator-definition}).
Consequently, for every state, we want to approximate the probability of a certain vector of missingness indicators.
The simple MAR assumption tells us that the probabilities can only depend on the features in $\Ihatalw$,
\rebuttal{meaning that $M(z \mid s) = M(z \mid s')$ when $s_i = s_i'$ for $i \in \Ialw$.}
Thus, for 
\rebuttal{all possible values} 
of the always-observable features of a state $s \in S$ and missingness indicator vector $r \in R$, we can compute the occurrence count $\#_\dataset(s, r) = \#_\dataset\left( Z_s^r \right),$ where $Z_s^r = \Big\{ z \in Z \mid \forall i \in I \colon $
$(i\in\Ihatalw \implies z_i=s_i)$ \textit{and} $
        (r_i=0 \! \implies z_i=\mis)
    \Big\}$.
Using $\#_\dataset(s, r)$, we obtain $\Mhat(z\mid s)$ as the fraction of observing $(s, \Rfunc(z))$ and the sum of counts for $s$ and all possible missingness indicators values:
\begin{equation}
    \label{eq:Mhat-AsMAR}
    \Mhat(z \mid s) 
    = \frac{\#_\dataset(s,\Rfunc(z))}{\sum_{r \in R} \#_\dataset(s, r)} .
\end{equation}

\paragraph{Probably approximately correct.}
With enough data, our approach yields an arbitrarily precise approximation of the true missingness function. 
We formalize this in \cref{thm:alg-1} as a PAC guarantee, not only proving that it becomes $\eps$-precise for every $\eps>0$, but that we can also bound the probability of an error (through unlucky sampling).
Additionally, we can adapt the claim to bound the imprecision of the resulting $\Mhat$ for a given dataset.
The proof is provided in \cref{app:algo-alg1}.

\begin{restatable}[PAC guarantee for \AsMAR]{theorem}{algOne}\label{thm:alg-1}
    Let $\mdp$ be a missingness-MDP with a simple MAR missingness function.
    For every precision $\eps$ and confidence threshold~$\delta$, there exists a number $n^*$, such that every dataset $\dataset$ of $n^*$ histories satisfies the following:
    With probability at least $\delta$, for $\Mhat$ computed on $\dataset$ according to \cref{eq:Mhat-AsMAR}, for all reachable states $s\in S$ and observations $z\in Z$, we have \mbox{$|\Mhat(z \mid s) - M(z \mid s)| \leq \eps$}.    
    Dually, for a dataset $\dataset$ and confidence threshold $\delta$, there exists an~$\eps$ such that with probability at least $\delta$, for all reachable states $s\in S$ and observations $z\in Z$, we have the same inequality, i.e.\ $|\Mhat(z \mid s) - M(z \mid s)| \leq \eps$.
\end{restatable}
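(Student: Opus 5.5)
The argument has two stages. First we show that, with high probability, the partition of features is recovered exactly, i.e.\ $\Ihatalw=\Ialw$. Then, on that event, we read \cref{eq:Mhat-AsMAR} as an empirical frequency estimator of a categorical distribution and apply a concentration bound with a union bound.

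\textbf{Stage 1: recovering the partition.} Every $i\in\Ialw$ is never missing, so $\Ialw\subseteq\Ihatalw$ holds deterministically. For $i\in\Imis$, there is a reachable state $s$ with $p_i:=\pr(\rand z_i=\mis\mid \rand z\sim M(s))>0$. Since $\pi_b$ is fair and reachable states are visited with positive probability, each history visits such an $s$ with some probability $q_i>0$; this is a quantity of the true process, independent of $\dataset$. The histories are i.i.d., so the probability that no history contains an observation with $z_i=\mis$ is at most $(1-q_ip_i)^{n}$. A union bound over the at most $|I|$ indices in $\Imis$ then makes the misclassification probability at most $\delta_1$ once $n$ is large.

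\textbf{Stage 2: concentration on the correct partition.} Condition on $\Ihatalw=\Ialw$. Fix a value $v$ of the always-observed features, and let $S_v$ be the reachable states agreeing with $v$ on $\Ialw$. Every observation in $\bigcup_r Z_s^r$ comes from a visit to some state in $S_v$. By simple MAR, each such visit draws its indicator vector $r$ independently from the same distribution $p^v\in\Dist(R)$, regardless of which state in $S_v$ was visited and of the history so far. Moreover, $M(z\mid s)$ equals $p^v(\Rfunc(z))$ for $z\preceq s$ and is $0$ otherwise; in the latter case $\Mhat(z\mid s)=0$ as well, because $Z_s^r$ only contains admittable observations. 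The estimator \cref{eq:Mhat-AsMAR} is therefore the empirical frequency of the indicator vectors among the $N_v=\sum_r\#_\dataset(s,r)$ visits to $S_v$.

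\textbf{Main obstacle: dependent samples.} The number of visits $N_v$ and the sequence of states are random and correlated within a history. I would handle this by the standard ``tape'' argument: pre-sample an i.i.d.\ sequence of indicator vectors from $p^v$ and consume it in order at each visit to $S_v$. The first $m$ draws are then i.i.d.\ no matter how the visits arise, since simple MAR makes the indicator independent of everything except $v$. Hoeffding's inequality and a union bound over $r\in R$ and $m$ (or the DKW inequality) then give, whenever $N_v\ge m_0=\lceil \ln(4|R|\,|V|/(1-\delta))/(2\eps^2)\rceil$, that $|\Mhat(z\mid s)-M(z\mid s)|\le\eps$ for all relevant $r$ with failure probability at most $(1-\delta)/(2|V|)$, where $V$ is the set of values $v$ that occur among reachable states.

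We still need $N_v\ge m_0$ with high probability. Each history visits $S_v$ at least once with probability $q_v>0$ by fairness. Hence $N_v$ stochastically dominates a $\mathrm{Bin}(n,q_v)$ variable, and a Chernoff bound gives $N_v\ge m_0$ except with small probability once $n\gtrsim 2m_0/q_v$.

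\textbf{Conclusion.} Allocate the total failure budget $1-\delta$ across the partition event, the counts $N_v$, and the concentration events. This determines $n^*$, and with probability at least $\delta$ every reachable $s$ and every $z$ satisfy the bound. For the dual claim, fix $\dataset$ and $\delta$ and invert the Hoeffding bound using the observed minimum count $\min_v N_v$: this gives $\eps=\sqrt{\ln(2|R||V|/(1-\delta'))/(2\min_v N_v)}$, with part of the failure budget reserved for the partition event. If some $N_v=0$, or if Stage 1 cannot be certified, we take the trivial $\eps=1$. This trivial choice is always valid, since $|\Mhat(z\mid s)-M(z\mid s)|\le 1$ for probabilities, so existence of $\eps$ holds outright.
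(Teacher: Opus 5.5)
Your proposal is correct and follows the paper's route. Like the paper, you read \cref{eq:Mhat-AsMAR} as an empirical frequency over all visits to states sharing the always-observed values, apply Okamoto/Hoeffding with a union bound, and use fairness with binomial domination of the visit counts while splitting the failure budget. Your Stage~1 and tape argument also make rigorous two points the paper's proof glosses over: it silently uses $\Ialw$ in place of $\Ihatalw$, and it treats $\sum_r \#_{\dataset}(s,r)$ as a fixed number of i.i.d.\ Bernoulli trials.

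Three small fixes are needed.
\begin{itemize}
\item $\Mhat(z\mid s)=0$ for $z\not\preceq s$ holds only under the paper's implicit convention that \cref{eq:Mhat-AsMAR} estimates $M(\cdot\mid s)$ as a distribution over $R$. It does not hold because $Z_s^r$ contains only admittable observations; $Z_s^r$ deliberately pools the whole group $S_v$, including observations not admittable by $s$.
\item Stage~1 should define the always-observed set relative to reachable states. A feature that goes missing only in unreachable states stays in $\Ihatalw$ forever, so the event $\Ihatalw=\Ialw$ can have probability zero; keeping that feature in $\Ihatalw$ is harmless.
\item Handling the random count $N_v$ by a union over all $m\ge m_0$ requires a slightly larger $m_0$ than the one you state.
\end{itemize}
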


\paragraph{Using additional assumptions on the missingness function.}
Beyond the necessary simple MAR assumption, we can exploit additional assumptions to improve the approximation of $M$ for the same~$\dataset$.
Consider a feature $i$ that is always observable, but does not affect the missingness probability of other features. 
We can exclude such $i$ from $\Ihatalw$, combining the occurrence counts of states that differ only in this feature.
Hence, if we assume $M$ to be {MCAR},
$\Ihatalw$ reduces to an empty set.
Consequently, we get that $\#_\dataset(s,r)$ is independent of $s$, and we effectively only count occurrences of missingness indicators, 
resulting in the algorithm \AMCAR.
We prove the correctness of these improvements in \cref{app:algo-alg1}.
In \cref{sec:exp}, we empirically show the improved precision of $\Mhat$ estimated from the same $\dataset$ with this approach.

\subsection{Approximating M with independent missingness indicators }\label{sec:algorithm-AIMI}

This section presents the \textit{approximation for independent missingness indicators} algorithm, \AIMI. 
It may be applied to $M$ of types MCAR, simple MAR as well as a subset of identifiable MNAR which satisfy the following requirements: 
\begin{enumerate}[topsep=2pt, itemsep=1.5pt, parsep=1.5pt, partopsep=0.5pt, leftmargin=20pt, labelsep=0.5em, labelindent=0pt, labelwidth=!]
    \item \textbf{Independence of missingness indicators:} 
        The fact that one feature is missing must not influence the missingness-probability of any other feature. %
        Formally, for $s \in S$ and $z \in Z$, 
        \mbox{$\pr(\rand z \mid \rand z \sim M(s)) = \Pi_{i\in I} \pr(\rand z_i \mid \rand z \sim M(s))$.}
    \item \textbf{No self-censoring:}
        Intuitively, a feature may not influence its own missingness probabilities. 
        Formally, for all $i\in I$ and every pair of states \mbox{$s,s'\in S$} that differ only in the $i$-th feature ($s_i \neq s'_i$, but for all $j\neq i$ we have $s_j=s'_j$) we have 
        \mbox{$\pr(\rand z_i = \bot \mid \rand z \sim M(s)) = \pr(\rand z_i = \bot \mid \rand z \sim M(s'))$.}
    \item \textbf{Positivity:} 
        Intuitively, if a feature affects the missingness probabilities of other features, we need to observe its value to learn the missingness probabilities.
        However, this is impossible if it always misses.
        Therefore, we require a \emph{positivity assumption}~\citep{hernan_causal_2020}:
        For all $i\in I$ and $s \in S$, we have $\pr(\rand z_i \neq \bot \mid \rand z \sim M(s)) > 0$.  %

\end{enumerate}

\paragraph{Computing $\Mhat$.}
We compute the occurrence count for every state $s\in S$, feature $i\in I$ and value of a corresponding $i$-th missingness indicator $r_i\in \{0,1\}$ as
$
    \#_\dataset(s, i, r_i) = \#_\dataset(Z^{i,r_i}_s),
$
where $Z^{i,r_i}_s = \Big\{ z \in Z \mid \forall j \in I\setminus\{i\} \colon 
     (z_j=s_j)$ \textit{and} $(r_i=0 \iff z_i = \bot)
\Big\}$.
By positivity, a large enough dataset almost surely contains observations to make the counters non-zero (i.e.\ for all $s$ and $i$, we have $\#(s,i,0)+\#(s,i,1)>0$).
The probability of a non self-censoring feature $i$ depends only on the other features $j\in I\setminus\{i\}$.
Finally, using the independence assumption, we can infer $\Mhat$ by taking the product of the individual missingness probabilities of all features. %
\begin{equation}\label{eq:Mhat-AIMI}
    \Mhat(z \mid s) = 
    \prod_{i\in I} \frac{\#_\dataset(s, i, \Rfunc(z)_i)}{\#_\dataset(s, i, 0) + \#_\dataset(s, i, 1)}.
\end{equation}

\paragraph{Probably approximately correct.}
In \cref{app:algo-alg2}, we prove \cref{thm:alg-2} that provides the same kind of guarantee as in \cref{thm:alg-1}; the only difference are the assumptions on the missingness function and the approach for calculating $\Mhat$.

\begin{figure*}[tbh]

  \captionsetup[subfigure]{labelformat=empty} %
  \captionsetup[subfigure]{textformat=empty}  %

    \begin{minipage}{\textwidth}
        \includegraphics[width=0.85\textwidth]{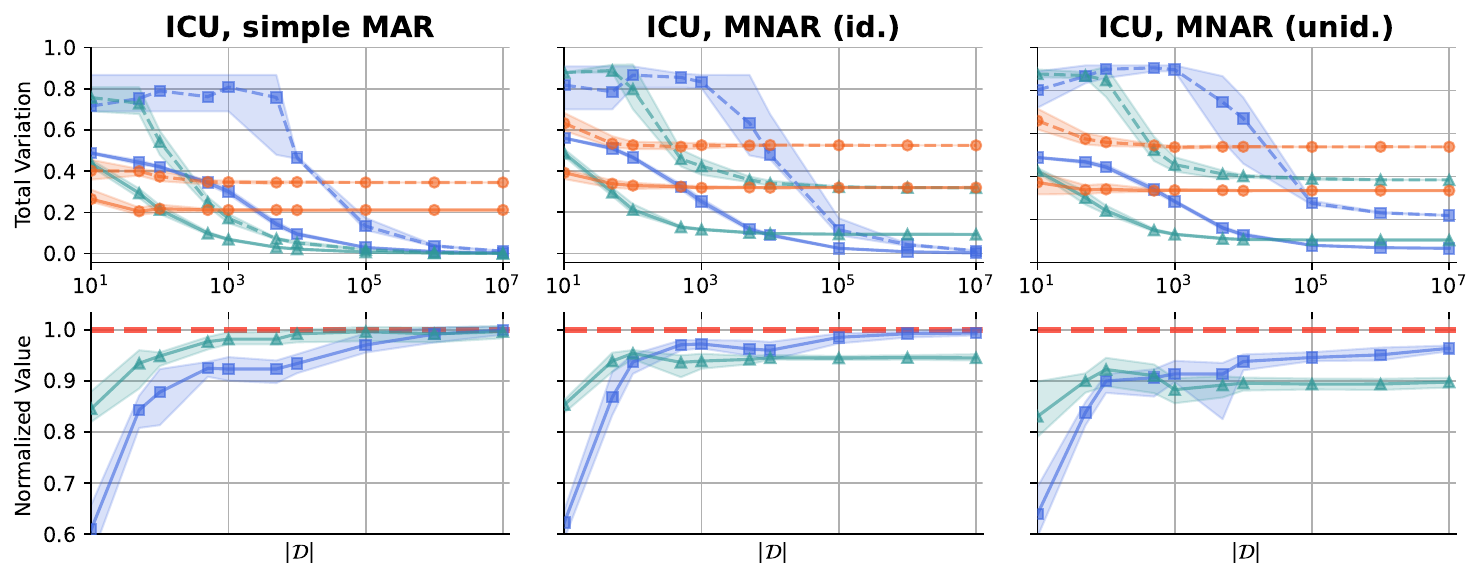}
        \includegraphics[width=0.14\textwidth, trim= 0 -50 0 0, clip]{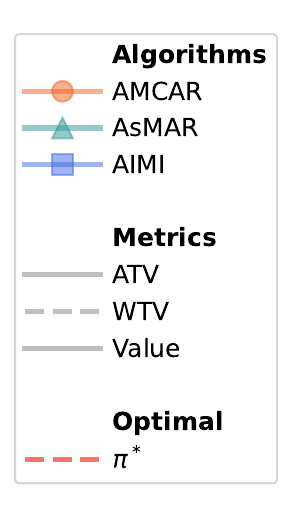}
    \end{minipage}\\
    \includegraphics[width=0.58\textwidth]{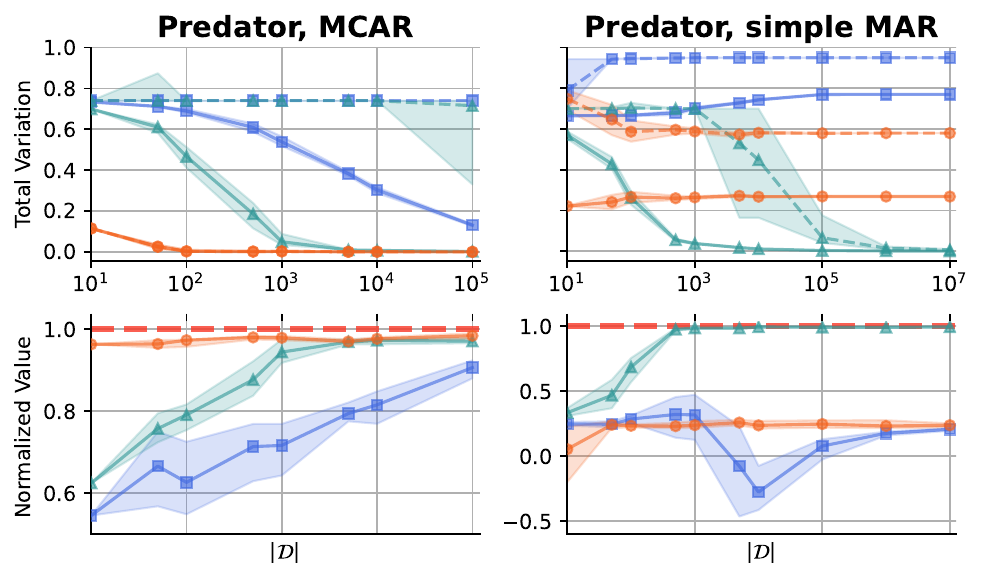}
    \topleftframe[1pt]{gray!40}{\includegraphics[width=0.4\textwidth, trim= 0 -20 0 -10, clip]{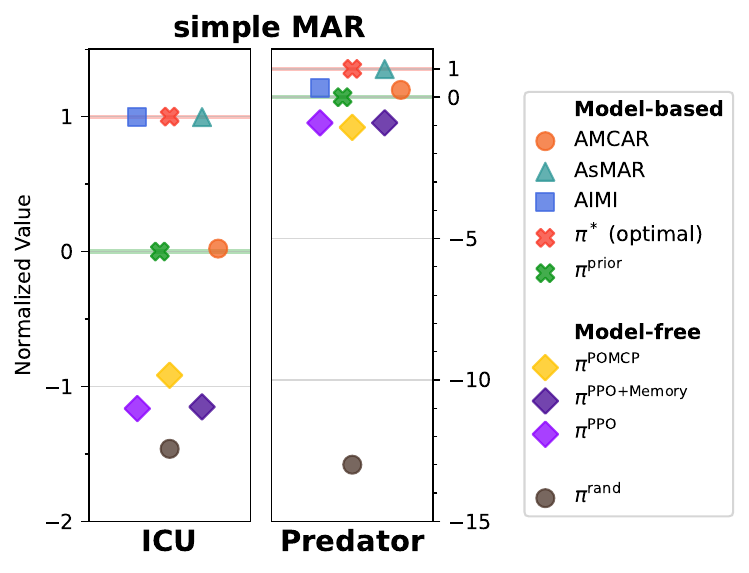}}
    \begin{subfigure}{0.001\textwidth}
        \caption{}
        \label{fig:benchmark-results}
    \end{subfigure}
    \begin{subfigure}{0.001\textwidth}
        \caption{}
        \label{fig:baseline-comparison}
    \end{subfigure}
    \caption{\textbf{Left} (and top)\textbf{:} Empirical results for the \ICU and \Predator benchmarks, with average/worst TV~(ATV/WTV) of $\Mhat$ and resulting policy values $V_{\mdp}(\pi)$. 
    Values are normalized so that 1 corresponds to the \emph{optimal} policy \piopt using the true $M$ and 0 to the \emph{prior} policy \piguess.
    \textbf{Bottom right:} Comparison of normalized values on \ICUSimpleMAR and \PredatorSimpleMAR between model-based and the model-free baseline methods.
    }
    \label{fig:results-figure}
\end{figure*}

\begin{restatable}
[PAC guarantee for \AIMI]
{theorem}{algTwo}\label{thm:alg-2}
    Let $\mdp$ be a \missMDP where the missingness function satisfies independence, non-self-censoring, and positivity.
    Then, $\Mhat$ computed using \cref{eq:Mhat-AIMI} offers the same PAC guarantees as in \cref{thm:alg-1}.
\end{restatable}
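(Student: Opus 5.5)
We will reduce \cref{thm:alg-2} to per-factor concentration bounds and then propagate the errors through the product in \cref{eq:Mhat-AIMI}. For a reachable state $s$ and a feature $i$, write $q_{s,i} := \pr(\rand z_i = \mis \mid \rand z \sim M(s))$. By independence, $M(z \mid s) = \prod_{i\in I} \pr(\Rfunc(\rand z)_i = \Rfunc(z)_i \mid \rand z \sim M(s))$, and each factor is either $q_{s,i}$ or $1-q_{s,i}$. Likewise, the estimator is the product of the ratios $\hat q_{s,i} := \#_\dataset(s,i,0)/(\#_\dataset(s,i,0)+\#_\dataset(s,i,1))$ or their complements. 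So it suffices to show that every $\hat q_{s,i}$ is close to $q_{s,i}$.

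\textbf{Step 1: identification.} The set $Z^{i,0}_s \cup Z^{i,1}_s$ consists of the observations that reveal every feature $j\neq i$ with value $s_j$. An occurrence of such an observation in $\dataset$ can come from a true state $s'$ that differs from $s$ only in feature $i$, since $s_i$ itself may be hidden. By no self-censoring, $\pr(\rand z_i=\mis\mid M(s')) = q_{s,i}$ for all such $s'$. Independence then gives, for any $s'$ in this class,
\[
\pr\bigl(\rand z_i=\mis \,\big|\, \forall j\neq i\colon \rand z_j = s_j,\ \text{true state } s'\bigr) = q_{s,i}.
\]
The conditioning event has positive probability by positivity. Hence, conditioned on landing in $Z^{i,0}_s\cup Z^{i,1}_s$, each occurrence is a Bernoulli$(q_{s,i})$ trial, regardless of which $s'$ in the class, the history, or the policy $\pi_b$ produced it.

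\textbf{Step 2: concentration.} The trials are not i.i.d.\ across a history, but the conditional success probability given the whole past is always $q_{s,i}$. We will therefore use a martingale (Azuma--Hoeffding) argument, or equivalently a sequential sampling argument on the count $N_{s,i} := \#_\dataset(s,i,0)+\#_\dataset(s,i,1)$, to obtain
\[
\pr\bigl(|\hat q_{s,i}-q_{s,i}|>\eps' \ \wedge\ N_{s,i}\ge m\bigr)\le 2e^{-2m\eps'^2}.
\]
Separately, because $\pi_b$ is fair and the states are reachable, and by positivity, each history hits the event of Step 1 with probability at least some $p_{\min}>0$. So $N_{s,i}\ge m$ fails with probability at most $e^{-\Omega(n)}$ for $n \gg m/p_{\min}$, by a Chernoff bound. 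This requires assuming histories of bounded length or a per-history visitation lower bound, as in the proof of \cref{thm:alg-1}.

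\textbf{Step 3: union bound and product propagation.} We take a union bound over the $|S|\cdot|I|$ pairs $(s,i)$, with each pair allotted failure probability $(1-\delta)/(2|S||I|)$. For numbers in $[0,1]$, $|\prod_i a_i - \prod_i b_i| \le \sum_i |a_i-b_i|$ by telescoping. Choosing $\eps' = \eps/|I|$ therefore gives $|\Mhat(z\mid s)-M(z\mid s)|\le\eps$ for all $z$ and all reachable $s$, simultaneously with probability at least $\delta$. This fixes $n^*$. The dual statement follows by solving the same bound for $\eps$ given $\dataset$ and $\delta$.

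The main obstacle is Step 2. The counts are random and the samples within a history are dependent, so the Hoeffding bound must be justified through the conditional Bernoulli structure from Step 1. Ideally we would reuse whatever concentration lemma was established for \cref{thm:alg-1}, with $\#_\dataset(s,r)$ replaced by the per-feature counts.
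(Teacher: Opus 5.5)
Your proposal is correct and follows the same route as the paper. Independence and no self-censoring make $\#_\dataset(s,i,0)$ and $\#_\dataset(s,i,1)$ the successes and failures of a Bernoulli process with parameter $\pr(\rand z_i=\mis\mid \rand z\sim M(s))$. Positivity together with the fair behaviour policy guarantees enough samples for every reachable $s$. The Okamoto/union-bound argument of \cref{thm:alg-1} is then reused.

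You make explicit two steps the paper's short proof leaves implicit:
\begin{itemize}
\item the telescoping bound with $\eps'=\eps/|I|$, which turns per-factor precision into precision of the product in \cref{eq:Mhat-AIMI};
\item the conditional-Bernoulli (martingale) justification for a random, within-history-dependent sample count, where the paper simply treats the counts as an i.i.d.\ Bernoulli process.
\end{itemize}
Your stated bound with $N_{s,i}\ge m$ does hold, but it needs the maximal (Ville-type) form of Hoeffding's inequality, not fixed-horizon Azuma.
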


\paragraph{Using additional assumptions on the missingness function.}
In general, \AIMI maintains a counter for every combination of the feature valuations of other features $j \in I \setminus \{i\}$.
If we know that a certain feature $j$ does not affect the missingness probability of $i$ -- there is no edge between the $j$-th \Snode-node and the $i$-th \Rnode-node -- we merge the counters for all values of the $j$-th feature.
This knowledge comes from 
\textbf{(a)} an m-graph, 
\textbf{(b)} assuming simple MAR while observing feature $j$ goes missing in $\dataset$, or 
\textbf{(c)} assuming MCAR, in which case we drop the dependency on $s$ in the counters.
We prove in \cref{app:algo-alg2} that all these modifications retain the PAC guarantees.

\subsection{Computing a policy with the approximations}\label{sec:alg-policy}
Here, we provide the final step of our problem statement: computing a PAC policy from the approximated \missMDP. 
We show in \cref{app:algo-policies} that after finitely many samples, $\Mhat$ is accurate enough to yield an $\eps$-optimal policy.
We highlight that learning $\Mhat$ to precision $\eps$ is insufficient, as the errors in $\Mhat$ aggregate when solving the \missMDP. 

\begin{restatable}[Computing $\eps$-optimal Policies]{theorem}{policies}\label{thm:policies}
    Let $\mdp$ be a \missMDP with a missingness function that is simple MAR or that satisfies independence, no self-censoring, and positivity.
    Assume we can sample histories collected under a fair policy, and we know a lower bound on the smallest missingness probability $p\leq \min_{s\in S,z\in Z} M(z\mid s)$.
    Then, for every given precision $\eps$ and confidence threshold $\delta$, we can in finite time compute a policy $\pi^*$ such that with probability at least $\delta$ it is $\eps$-optimal, i.e.\ $(\sup_{\pi} V_{\mdp}(\pi)) - V_{{\mdp}}(\pi^*) \leq \eps$.
\end{restatable}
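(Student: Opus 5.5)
The proof composes three ingredients: the PAC guarantees for $\Mhat$ (\cref{thm:alg-1,thm:alg-2}), a \emph{simulation lemma} bounding $|V_{\mdp}(\pi) - V_{\MDPhat}(\pi)|$ uniformly over policies in terms of the error of $\Mhat$, and an off-the-shelf $\eps$-optimal POMDP solver run on $\MDPhat$. We split the budget $\eps = 2\epsA' + \epsPi$, where $\epsA'$ is the allowed value discrepancy between $\mdp$ and $\MDPhat$ and $\epsPi$ is the solver tolerance. If $\pihat$ is $\epsPi$-optimal in $\MDPhat$ and $\sup_\pi |V_{\mdp}(\pi)-V_{\MDPhat}(\pi)| \le \epsA'$, then
\[
\sup_\pi V_{\mdp}(\pi) - V_{\mdp}(\pihat) \le \sup_\pi V_{\MDPhat}(\pi) + \epsA' - V_{\MDPhat}(\pihat) + \epsA' \le \epsPi + 2\epsA' .
\]
Finite-time computation of $\pihat$ follows from standard approximate value iteration or point-based methods for discounted POMDPs \citep{hauskrecht2000value}, since $\MDPhat$ is fully specified. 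The sample-size part then follows from \cref{thm:alg-1,thm:alg-2}: for a target precision $\eta$ on $\Mhat$ they give $n^*$ histories after which $|\Mhat(z\mid s)-M(z\mid s)|\le\eta$ for all reachable $s$ and all $z$, with probability at least $\delta$. Unreachable states do not affect $V$ under any policy, so the error there is irrelevant.

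\textbf{Simulation lemma.} Policies act on histories or beliefs. We therefore compare the two processes at the level of trajectory distributions: fix a history-dependent policy $\pi$ and compare the laws of the first $H$ steps of $(s,z,a)$ in $\mdp$ and $\MDPhat$. They share $\T$, $\init$ and $\pi$, and differ only in the observation kernel. A standard hybrid (telescoping) argument bounds the total variation between the two $H$-step laws by $H\cdot\max_s \|M(\cdot\mid s)-\Mhat(\cdot\mid s)\|_1 \le H|Z|\eta$. With $\reward$ bounded by $R_{\max}$, truncating at horizon $H$ costs $\gamma^H R_{\max}/(1-\gamma)$, so
\[
|V_{\mdp}(\pi)-V_{\MDPhat}(\pi)| \le \frac{2\gamma^H R_{\max}}{1-\gamma} + \frac{R_{\max}H|Z|\eta}{1-\gamma}.
\]
A cleaner alternative sums $\gamma^t t$ and gives a $\gamma/(1-\gamma)^2$-type bound directly. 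Choosing $H$ and then $\eta$ makes this at most $\epsA'$.

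\textbf{Main obstacle.} The delicate point is that a policy computed on $\MDPhat$ is a map from $\MDPhat$-beliefs, while executing it in $\mdp$ requires belief updates with the unknown $M$. One would first try to work with history-dependent policies, to which the trajectory-level lemma applies verbatim. One must also check that the supports agree, so the belief update in $\MDPhat$ is well defined on every history arising in $\mdp$; otherwise the bound degrades. This is exactly where the assumed lower bound $p \le \min M(z\mid s)$ enters. Choosing $\eta < p$ guarantees that $\Mhat(z\mid s)>0$ whenever $M(z\mid s)>0$, and the estimators in \cref{eq:Mhat-AsMAR,eq:Mhat-AIMI} are zero whenever the count is zero, which happens in particular when $M(z\mid s)=0$. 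So on the good event, $\supp\Mhat(s)=\supp M(s)$. The same bound also allows the ratio $M/\Mhat$ to be controlled if a belief-level (rather than trajectory-level) argument is preferred.

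Finally, set $\eta=\min(p/2,\ \text{value from the lemma})$ and invoke \cref{thm:alg-1} or \cref{thm:alg-2}, as appropriate to the assumed missingness type, to obtain $n^*$. Then solve $\MDPhat$ to precision $\epsPi$. Each step takes finite time, and the resulting policy is $\eps$-optimal with probability at least $\delta$.
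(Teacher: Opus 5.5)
Your proof is correct, and its outer skeleton is the paper's. It uses the same budget split $\eps=\epsPi+2\cdot(\text{model error})$, the same three-step chain of inequalities, and the same appeal to \cref{thm:alg-1,thm:alg-2}. Its use of $p$ is also essentially the paper's: the paper argues that $\Mhat$ vanishes exactly where $M$ does, declaring $M(z\mid s)=0$ once the Okamoto upper bound falls below $p$, which amounts to your choice $\eta<p$.

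The genuine difference is in the value-difference step, \cref{lem:policy-val-diff}. The paper passes to the uncountable belief MDPs $B$ and $\widehat{B}$ and truncates them to finite MDPs. It then imports the continuity lemma for constant-support robust MDPs \cite[Lemma 5]{MWW25arxiv}. That external result is what forces the support-agreement hypothesis and the knowledge of $p$, since its modulus depends on the minimal transition probability, and it leaves $f$ only implicitly given. Your hybrid argument on the $H$-step trajectory laws is self-contained and gives an explicit modulus of order $R_{\max}\abs{Z}\eta/(1-\gamma)^2$. It also fixes $H$ before $\eta$, whereas the paper's shortcut $\epsG=\epsA$ makes the size of the truncated belief MDP depend on $\epsA$.

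Working with history-dependent policies also handles a point the paper passes over. In the \AIMI (MNAR) case the belief update depends on the actual probabilities of $M$, so $B$ and $\widehat{B}$ do not share a state space; histories, by contrast, are common to both models. What the paper's route gains is mainly reuse of an existing robust-MDP result rather than a direct coupling computation.

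One remark on your ``main obstacle'' paragraph: in your route, support agreement is not actually needed for the bound, contrary to your claim that the bound would otherwise degrade. Completing the policy arbitrarily on $\mdphat$-null histories leaves $V_{\mdphat}$ unchanged, and the total-variation estimate already charges for those histories under $\mdp$. So $p$ only serves to keep the $\Mhat$-belief update well defined during execution.
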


\paragraph{Practical considerations.}
\Cref{thm:policies} concern asymptotic convergence to an $\eps$-optimal policy, providing the theoretical foundation of our approach.
In practice, the required number of samples is very large, and we work with datasets that are not necessarily sufficient to provide the $\eps$-optimality guarantees.
Datasets of limited size may cause a practical problem:
For an observation $z$ with \mbox{$\#_\dataset(s,\Rfunc(z)) = 0$}, 
for any $s \in S$ we obtain $\Mhat(z \mid s) = 0$,
leading to a division by zero for $s$ when performing the belief update $\tau$.
We circumvent this case by setting  
\mbox{$\#_{\dataset, \smallextra}(s,r) = \#_\dataset(s,r) + \smallextra$}, i.e.\ we add a small $\smallextra > 0$ to every count. 
The influence of $\smallextra$ diminishes with an increasing dataset size $|\dataset|$.

\section{Experiments}
\label{sec:exp}
We conduct an experimental evaluation to test our theoretical results and to compare our methods against several baselines.
We aim to answer the following questions:

\begin{questionenum}[topsep=2pt, itemsep=1.5pt, parsep=1.5pt, partopsep=0.5pt, leftmargin=20pt, labelsep=0.5em, labelindent=0pt, labelwidth=!]
    \item \label{q:adequate_approximation}
    Do our proposed learning algorithms adequately approximate the missingness function?
    \item \label{q:missingness_approximation}
    How do (in)correct assumptions on the missingness function affect the approximation?
    \item \label{q:value_convergence}
    With increasing dataset size, does the value of the policy computed on the approximated \missMDP converge to that of the optimal policy on the true \missMDP? 
    \item \label{q:baselines}
    How does the value computed from the approximated \missMDP compare against model-free baselines?
\end{questionenum}
Below, we first describe the experimental setup.
Then, we analyze the results in the order of the above questions.

\paragraph{Benchmarks.}
We consider two environments with varying missingness types:
\ICU models a doctor treating a patient, whose vital measurements are not always available~\citep{johnson_alistair_mimic-iv_2022}%
, and
\Predator, a variant of the Tag environment~\citep{DBLP:conf/ijcai/PineauGT03}, where a predator chases a partially hidden prey.
For our benchmarks, we consider the following four missingness types:
{(1)} \MCAR,
{(2)}~\simpleMAR, a simple MAR function,
{(3)} \MNARid, an identifiable MNAR function (fulfilling the required assumptions of \AIMI), and
{(4)} \MNARunid, an unidentifiable MNAR function with self-censoring.
For \Predator and all missingness functions, the $(x,y)$-coordinates of the prey always go missing jointly, i.e.~the missingness indicators are dependent;
for \ICU, the missingness indicators are independent.
As mentioned in \cref{app:benchmarks}, \ICU has 800 reachable states and \Predator up to 1311.

\paragraph{Setup.}
For a range of dataset sizes $|\dataset|$, we collect data using the uniform random policy \pirand with 
\mbox{$\forall a \in A, \pirand(a\mid \cdot) = \nicefrac{1}{|A|}$}, 
and compute the estimate $\Mhat \approx M$ using our proposed algorithms: 
\AMCAR, \AsMAR, and \AIMI \mbox{(\cref{sec:algos})}.
Each $\Mhat$ yields an approximated \missMDP $\MDPhat$, for which we compute a policy \pihat using the POMDP solver SARSOP~\citep{kurniawati2008sarsop}.
We consider the following baselines: 
\textbf{(1)}~\emph{optimal}: the SARSOP policy \piopt computed for the true $M$ (the upper bound);
\textbf{(2)}~\emph{prior}: the SARSOP policy \piguess with an uninformed prior of $M$, where each feature independently goes missing with probability $0.5$;
\textbf{(3)} \emph{POMCP}~\citep{silver2010monte}: generating an online POMDP planning policy \piPOMCP;
\textbf{(4)} \emph{PPO}~\citep{DBLP:journals/corr/SchulmanWDRK17}: the policies \piPPOmemory and \piPPO, with and without memory.
We group \piPPO and \piPOMCP as \emph{model-free} baselines as they rely solely on simulation access to the \missMDP.
We provide more extensive details of the setup in \cref{app:benchmarks}.
Moreover, we publicly provide the code of our implementation\footnote{\url{https://github.com/ai-fm/missingness-pomdps}}.
For further details on reproducibility, see \cref{app:reproducibility}.

\paragraph{Metrics.}
For every dataset size and method, we perform $20$ independent runs and report the average together with the interquartile range~(shaded area) of the following metrics:

\vspace{-0.65\parskip}
\begin{enumerate}[topsep=2pt, itemsep=1.5pt, parsep=1.5pt, partopsep=0.5pt, leftmargin=20pt, labelsep=0.5em, labelindent=0pt, labelwidth=!]
\item To empirically assess the quality of the approximation $\Mhat$ compared to the true $M$, we compute the \textbf{total variation}~(TV) of the distributions at a state $s \in S$ as %
$TV(s) = \frac{1}{2} \sum_{z\preceq s} \left|\Mhat(z\mid s) - M(z\mid s)\right|$.
We aggregate the TV across states by the average TV~(ATV): $\nicefrac{1}{|S|}\sum_s TV(s)$, and the worst TV~(WTV): $\max_s TV(s)$.
\item %
We asses how the various \pihat from the algorithms perform on the true \missMDP $\mdp$ by comparing their \textbf{value}~$V_{\mdp}(\pihat)$ to $V_{\mdp}(\piguess)$ and the optimum $V_{\mdp}(\piopt)$.
All policy values are normalized s.t.~$1$ and $0$ correspond to the values of the \emph{optimum} and \emph{prior} baselines, respectively.
\end{enumerate}

\paragraph{Results.}
The experimental results in Figure~\hyperref[fig:results-figure]{\labelcref{fig:results-figure} (left, top)} present how the TV of the approximated missingness function $\Mhat$ and the value of the associated policy \pihat evolve with dataset size $|\dataset|$.
\cref{fig:baseline-comparison} compares our algorithms to several baselines for selected benchmarks.
We report the best baseline performance across multiple seeds.
We provide baseline results for all benchmarks as well as results for an additional \PredatorMNARunid benchmark in \cref{app:benchmarks}.

\paragraph{
\cref{q:adequate_approximation}:
    With sufficient data, our algorithms adequately approximate the missingness function.
}
Under the appropriate missingness assumptions, each algorithm learns the missingness function (bringing the TV near zero): \AMCAR learns the exact missingness function in \PredatorMCAR within 100 observations.
We observe similar results for \AsMAR in \ICUSimpleMAR and \PredatorSimpleMAR, and for \AIMI in \ICUMNARid. 
\vspace{2ex}

\paragraph{
\cref{q:missingness_approximation}:
    The assumptions on the missingness function significantly affect the quality of the approximation. 
}
Under correct assumptions on the missingness function, the TVs of the approximated $\Mhat$ approach zero (e.g. \AIMI on \ICUSimpleMAR and \ICUMNARid).
Further, relaxations on the missingness (e.g. assuming MAR despite MCAR missingness) allows the approximation of $M$, but may lead to significant data inefficiency (see \AMCAR vs. \AsMAR on \PredatorMCAR).
Breaking the missingness assumptions results in converging to a poor WTV (see \AIMI on all Predator benchmarks).

\paragraph{
\cref{q:value_convergence}:
    Convergence to the optimal policy follows the quality of the approximation.
}
With a sufficiently accurate approximation of the missingness function, the values of our method's policies converge to the optimal policy's \piopt values (e.g. \AIMI on \ICUSimpleMAR and \ICUMNARid, \AMCAR and \AsMAR on \PredatorMCAR, \AsMAR on \PredatorSimpleMAR). 
Despite the missingness of \ICUMNARunid not adhering to the necessary assumptions, \AIMI approximates the relevant parts of $M$ well enough to yield a policy that is considerably close to \piopt.

\paragraph{
\cref{q:baselines}: 
Baseline methods, including PPO and POMCP, are not competitive.
}
\cref{fig:baseline-comparison} compares our algorithms to several baselines.
Our algorithms, under correct missingness assumptions (see \AsMAR or \AIMI for \ICU, or \AsMAR for \Predator), consistently outperform policies derived from an uninformed prior over the missingness function (\piguess).
Meanwhile, all model-free approaches---\piPPO, \piPPOmemory, and \piPOMCP---significantly underperformed on both benchmarks. 
We attribute this gap to two factors. 
First, model-free methods solely access the transition function through simulation. 
Second, in a separate experiment without missingness, these methods successfully learned the optimal policy for the fully observable MDP. 
This indicates that the poor performance stems from these methods' inability to effectively handle the specific type of partial observability of \missMDPs.
We attribute the better performance (compared to the random policy \pirand) of the model-free approaches in \Predator, in contrast to \ICU, to the lower stochasticity of the underlying transition function of \Predator.

\section{Conclusion}
\MissMDPs integrate the theory of missing data with decision-making under uncertainty.
From data generated by a \missMDP with known underlying MDP, we approximate the unknown missingness function, which -- under assumptions about its type -- enables the computation of $\varepsilon$-optimal policies.
We show that incorrect assumptions about the missingness type may lead to misspecified models and suboptimal policies.
Our experiments support the theory and demonstrate the practical benefits of our approach, highlighting the superiority over model-free baselines in our problem setting. %
Future work may lift the assumption of a known transition function and extend \missMDPs to the more general setting of miss-POMDPs.

\section*{Acknowledgements}
This work was supported by the European Research Council (ERC) Starting Grant 101077178 (DEUCE), and the Czech Science Foundation grant GA23-06963S (VESCAA).

\bibliography{references_tidy}
\bibliographystyle{ijcai26named}

\appendix

\section{Reproducibility}\label{app:reproducibility}

All code required for conducting experiments is included in an anonymized repository at \url{https://anonymous.4open.science/r/missingness-pomdps}.
The range of values tried per hyperparameter during development of the paper are shown in \cref{tab:parameter-search}, and the final values were chosen based on best cumulative reward obtained.
The final hyperparameters used for each model and algorithm are provided in the repository (see our \texttt{README.md}).
The computing infrastructure in which the experiments have been carried out is described in \cref{app:setup}.

\section[Proofs for ignorability]{Proofs for \cref{sec:missingness}: ignorability}
\label{app:proofs}
\label{app:belief-oblivious}

\begin{lemma}\label{thm:ignorability}
    If a missingness function $M$ is MAR, then
    \[
    \forall z \in Z, \exists p \in [0, 1], \forall s \in S, M(z \mid s) = \indicatorfunction_{z \preceq s} \cdot p.
    \]
\end{lemma}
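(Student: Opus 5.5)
The proof rests on one observation: for a fixed state $s$, the map $z \mapsto \Rfunc(z)$ restricted to observations admittable by $s$ is a bijection onto $R$. Given $s$ and $r \in R$, there is exactly one $z \preceq s$ with $\Rfunc(z) = r$, namely $z_i = s_i$ if $r_i = 1$ and $z_i = \mis$ if $r_i = 0$. By the definition of a \missMDP, every $z \in \supp(M(s))$ satisfies $z \preceq s$. Hence for $z \preceq s$,
\[
M(z \mid s) = \pr\bigl(\Rfunc(\rand z') = \Rfunc(z) \mid \rand z' \sim M(s)\bigr),
\]
and $M(z \mid s) = 0$ whenever $z \npreceq s$.

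With this in hand, fix $z \in Z$ and split into two cases.

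\emph{Case 1: no state admits $z$.} Then $M(z \mid s) = 0 = \indicatorfunction_{z \preceq s}$ for every $s$, so any $p$ works; take $p = 0$.

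\emph{Case 2: some state $s_0$ admits $z$.} Set $p := M(z \mid s_0) \in [0,1]$. For an arbitrary $s \in S$ there are two subcases. If $z \npreceq s$, then $M(z \mid s) = 0 = \indicatorfunction_{z \preceq s} \cdot p$ by the support condition. If $z \preceq s$, then $z \preceq s, s_0$, and the MAR definition applied to this pair gives equal probabilities of the indicator $r = \Rfunc(z)$ under $M(s)$ and $M(s_0)$. By the bijection observation these probabilities are exactly $M(z \mid s)$ and $M(z \mid s_0)$, so $M(z \mid s) = p$.

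The only step needing care is the bijection: showing that the event $\{\Rfunc(\rand z') = \Rfunc(z)\}$ under $M(s)$ coincides with the event $\{\rand z' = z\}$. This follows because any $\rand z'$ in the support agrees with $s$, and hence with $z$, on the non-missing coordinates, while the missing coordinates are fixed by $r$. Everything else is a direct unfolding of the definitions.
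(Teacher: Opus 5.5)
Your proof is correct and follows the paper's argument. The support condition handles $z \npreceq s$, and MAR gives a single value $p$ across all states that admit $z$.

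You also spell out a step the paper leaves implicit in ``directly follows from the MAR assumption'': for $z \preceq s$, $M(z \mid s)$ equals the probability of the indicator $\Rfunc(z)$ under $M(s)$. Your Case 1 is vacuous, since $S$ is the full product of the $S_i$ and so every $z$ is admitted by some state, but it does no harm.
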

\begin{proof}
Suppose that $M$ is MAR.
The lemma states that $\forall z \in Z$, $\exists p \in [0, 1]$, $\forall s \in S$, $M(z \mid s) = p$ if $z \preceq s$ and otherwise $M(z \mid s) = 0$.
Since $z \not \preceq s$ implies that $M(z \mid s) = 0$, we only need to show that
$\forall z \in Z, \exists p \in [0, 1], \forall s \in S, z \preceq s \Rightarrow M(z | s) = p$,
which directly follows from the MAR assumption.

\end{proof}

\begin{remark}
\cref{thm:ignorability} implies that the missingness function can be omitted in the belief update.
Let $b \in \beliefset$ be a belief, and let $s' \in S$. Then, for any $a \in A$ and $z \in Z$, it holds that
\begin{align*}
    b'(s') &= \tau(b, a, z)(s')  \\
     &\coloneqq\frac{M(z \mid s') \sum_{s\in S} \T(s'\mid s,a)b(s)}{\sum_{s'' \in S} M(z \mid s'') \sum_{s\in S} \T(s''\mid s, a)b(s)} 
     \tag{By definition of belief update}\\
    &= \frac{\indicatorfunction_{z \preceq s'} \cdot p \sum_{s\in S} \T(s'\mid s,a)b(s)}{\sum_{s'' \in S} \indicatorfunction_{z \preceq s''} \cdot p \sum_{s\in S} \T(s''\mid s, a)b(s)} 
    \tag{By \cref{thm:ignorability}}\\
    &=\frac{ \indicatorfunction_{z \preceq s'} \sum_{s\in S} \T(s'\mid s,a)b(s)}{\sum_{s'' \in S} \indicatorfunction_{z \preceq s''} \sum_{s\in S} \T(s''\mid s, a)b(s)}.
    \tag{$p$ cancels out}
\end{align*}
Therefore, the probabilities of $M$ do not affect the resulting probabilities of the belief update.
In particular, this means that maintaining a belief while executing a \missMDP does not require knowledge of $M$.

Still, we stress again that one needs $M$ to compute an optimal policy because this requires constructing and solving the belief MDP (see \cite[Chapter 16.4.1]{AIMA}), which in turn requires knowing the probability $\pr(b'\mid b,a)$ of going to a successor belief $b'$ from a current belief $b\in\beliefset$ upon playing action $a\in A$.
Concretely, the probability of a successor belief $b'=\tau(b,a,z)$ depends on the probability of $z \in Z$ given $b$ and $a$, which in turn depends on $M$,
\begin{align*}
\pr(b'\mid b,a) &= \sum_{z \in Z} \pr(z\mid b,a) \indicatorfunction_{b' = \tau(b,a,z)},\\
\pr(z\mid b,a) &= \sum_{s\in S} b(s)\sum_{s' \in S} \T(s'\mid s,a) M(z \mid s').
\end{align*}
Here, no normalization occurs, and the probabilities of $M$ do not cancel out.
\end{remark}

\section[Proofs for Probably Approximately Correct]{Proofs for \cref{sec:algos}: probably approximately correct} \label{app:algos}

This appendix is about proving that given enough data, we can approximate the missingness function to arbitrary precision $\eps$, or the other way round: we can prove a certain precision $\eps$ for any given dataset $\dataset$.
In both directions, we provide a probabilistic guarantee, i.e.\ that the result is correct with probability at least $\delta$.
The reason the guarantee has to be probabilistic is that our knowledge relies on a sampled dataset, and, intuitively, there always is a chance that we were \enquote{unlucky} and received a very unlikely sequence of samples from which we infer a wrong approximation.

\paragraph{Outline.}
First, in \cref{app:algo-bernoulli} we recall standard notions from statistics literature: Bernoulli processes and the fact that building on Okamoto's inequality, we can obtain a size for our dataset $\dataset$ given precision $\eps$ and confidence $\delta$ (or, analogously, obtain a precision $\eps$ given $\dataset$ and $\delta$).
Afterwards, \cref{app:algo-alg1} and \cref{app:algo-alg2} provide the proofs of \cref{thm:alg-1,thm:alg-2}, respectively, i.e.\ the guarantees for our algorithms. Moreover, they prove the guarantees for the modified algorithms when using more information about the missingness function.
Finally, \cref{app:algo-policies} proves \cref{thm:policies}, our main result that $\eps$-policies can be computed.

\subsection{Bernoulli processes}\label{app:algo-bernoulli}

\begin{definition}[Bernoulli process~\cite{bernoulli1713ars}, {\cite[Chapter 4.3]{dekking2005modern}}]
    A Bernoulli process is a sequence of binary random variables that are independent and identically distributed.
    All random variables have probability $p$ to yield a 1, and probability $1-p$ to yield a 0.
\end{definition}

Throughout this appendix, we write $n$ for the length of the sequence of a Bernoulli process, and $k$ for the number of successes, i.e.\ the number of times it yielded a 1.
Moreover, we denote by $\hat{p} = \frac k n$ the empirical success probability.
Okamoto's seminal work proves the following property of estimating $p$ through observing a Bernoulli process:

\begin{theorem}[Okamoto's inequality~{\cite[Theorem 1]{Okamoto59}}]
    For a Bernoulli process with $n$ repetitions and $k$ successes and a given precision $\eps$, we have
    \[
        \Pr(\hat{p} - p \geq \eps) \leq \mathsf{e}^{-2 \cdot n \cdot \eps^2} \text{ and }
        \Pr(p - \hat{p} \geq \eps) \leq \mathsf{e}^{-2 \cdot n \cdot \eps^2}.
    \]
\end{theorem}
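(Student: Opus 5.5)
We prove the upper-tail bound $\Pr(\hat p - p \geq \eps) \leq \mathsf{e}^{-2n\eps^2}$ by the exponential-moment (Chernoff) method. The lower-tail bound then follows by symmetry: replacing every variable $X_j$ of the Bernoulli process by $1-X_j$ gives again a Bernoulli process, now with parameter $1-p$ and empirical success probability $1-\hat p$. Under this substitution the event $p-\hat p\geq\eps$ becomes $\hat p' - p' \geq \eps$, so the first bound applies verbatim.

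For the upper tail, write $k=\sum_{j=1}^n X_j$ with $X_j$ i.i.d.\ Bernoulli$(p)$. Fix $\lambda>0$ and apply Markov's inequality to $\mathsf{e}^{\lambda(k-np)}$:
\[
\Pr(\hat p - p\geq \eps) = \Pr\bigl(k-np\geq n\eps\bigr) \leq \mathsf{e}^{-\lambda n\eps}\,\mathbb{E}\bigl[\mathsf{e}^{\lambda(k-np)}\bigr] = \mathsf{e}^{-\lambda n\eps}\prod_{j=1}^n \mathbb{E}\bigl[\mathsf{e}^{\lambda(X_j-p)}\bigr].
\]
The last equality uses independence of the $X_j$.

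The key step is to bound the single-variable moment generating function:
\[
\mathbb{E}\bigl[\mathsf{e}^{\lambda(X-p)}\bigr] \leq \mathsf{e}^{\lambda^2/8}.
\]
This is Hoeffding's lemma for a centred variable supported in an interval of length $1$. I would prove it directly for the Bernoulli case. Set $L(\lambda) = \log\bigl((1-p)\mathsf{e}^{-\lambda p} + p\,\mathsf{e}^{\lambda(1-p)}\bigr)$. One checks $L(0)=0$ and $L'(0)=0$. Moreover, $L''(\lambda)=q(1-q)$, where $q = p\mathsf{e}^{\lambda}/(1-p+p\mathsf{e}^{\lambda})\in[0,1]$, so $L''\leq 1/4$. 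Taylor's theorem with remainder then gives $L(\lambda)\leq \lambda^2/8$. This second-derivative computation is the only nontrivial part, and I expect it to be the main (though modest) obstacle; everything else is bookkeeping.

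Combining the two steps gives $\Pr(\hat p-p\geq\eps)\leq \exp(-\lambda n\eps + n\lambda^2/8)$ for every $\lambda>0$. The exponent is minimised at $\lambda = 4\eps$, which yields $\exp(-4n\eps^2+2n\eps^2)=\mathsf{e}^{-2n\eps^2}$, as claimed.
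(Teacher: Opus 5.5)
Your proof is correct. The Markov/Chernoff step, the computation $L''(\lambda)=q(1-q)\le 1/4$ (which also covers the degenerate cases $p\in\{0,1\}$), the choice $\lambda=4\eps$, and the reduction of the lower tail via $X_j\mapsto 1-X_j$ all check out.

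The paper does not prove this statement at all. It imports it verbatim from Okamoto (1959) and uses it only to get the two-sided bound $2\mathsf{e}^{-2n\eps^2}$ and the sample-size relation derived right after it. So the comparison is between a citation and your self-contained Chernoff--Hoeffding derivation.

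Your route makes the hypotheses explicit:
\begin{itemize}
\item $n$ must be a fixed number of i.i.d.\ trials.
\item $\eps>0$ is genuinely needed, since you take $\lambda=4\eps>0$. For $\eps<0$ the inequality is false: with $\eps=-1$ the left-hand side equals $1$.
\end{itemize}
Your argument also generalises easily. Combined with the standard convexity reduction to the two-point case, your Bernoulli computation gives Hoeffding's lemma for any variable with values in $[0,1]$, so the same bound holds for arbitrary bounded i.i.d.\ samples.

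In the other direction, if you optimised the exact Bernoulli log-moment generating function instead of first relaxing it to $\lambda^2/8$, you would obtain the sharper relative-entropy (Chernoff) bound. That is the kind of tighter alternative the paper mentions but deliberately forgoes, since it only needs some explicit bound to exist.
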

Combining these, we get that $\Pr(\abs{\hat{p} - p} \geq \eps) \leq 2 \cdot \mathsf{e}^{-2 \cdot n \cdot \eps^2}$, in words: The probability of the estimate~$\hat p$ being more than $\eps$ away from the true probability $p$ is less than $2 \cdot \mathsf{e}^{-2 \cdot n \cdot \eps^2}$.
For our guarantees, we want to be $\varepsilon$-precise 
with probability at least $\delta$, so the probability of error should be upper bounded by $1-\delta$.%
\footnote{Note that in this paper, we use $\delta$ as the probability of the estimate being correct, unlike e.g.\ \cite{WitS}, where $\delta$ is the probability of an error.} 
Thus, we require $2\cdot \mathsf{e}^{-2\cdot n \cdot \eps^2} \leq {1-\delta}$.
Then, we can solve the inequality for $\eps$ or $n$:
\begin{equation}
    \label{eq:okamoto}
    2\cdot \mathsf{e}^{-2\cdot n \cdot \eps^2} \leq {1-\delta} \, \Leftrightarrow \, \eps \geq \sqrt{\frac{\ln(\frac{2}{1-\delta})}{2\cdot n}} \, \Leftrightarrow \, n \geq \frac{\ln(\frac{2}{1-\delta})}{2\cdot \eps^2}.
\end{equation}

In other words, given two of precision $\eps$, confidence $\delta$, and number of repetitions $n$, we can infer the third.
We remark that there exist other inequalities similar to Okamoto's that yield the same result, but with tighter bounds; we refer to~\cite[Section 3]{WitS} for a discussion.
However, as our goal is only to prove the existence of a bound, we choose the conservative Okamoto bound for its easier accessibility.

\subsection{PAC guarantees for AsMAR}\label{app:algo-alg1}

\algOne*

\begin{proof}
    \textbf{Proof outline.}
    We first show that the computation of every $\Mhat(z\mid s)$ is related to a Bernoulli process. 
    Then, using the results of \cref{app:algo-bernoulli}, we can prove the claims of the theorem for individual state-observation pairs.
    Next, we lift this to all state-observation pairs by distributing the confidence $\delta$.
    Finally, we individually explain how this yields the two claims of the theorem.

    \medskip
    \textbf{The Bernoulli process related to $\Mhat(z\mid s)$.}
    Fix a state $s\in S$ and an observation $z \in Z$.
    Consider the following random variable:
    Sample a state $s'\in S$ and the corresponding observation $z'\in Z$.
    Set the random variable to 1 if $\forall i \in I \colon 
        (i\in\Ialw \implies z'_i=s_i) \wedge 
        (\Rfunc(z)_i=0 \implies z'_i=\mis )$; set the random variable to 0 if $\forall i \in I \colon (i\in\Ialw \implies z'_i=s_i)$; and ignore the sampled $(s',z')$ otherwise, i.e.\ if $\exists i\in I \colon (i \in \Ialw \wedge z'_i \neq s_i).$
    Note that the random variable is 1 exactly when the sample would be counted by $\#_\dataset(s, \Rfunc(z))$, and the sample is not ignored exactly when it would be counted by $\sum_{r \in R} \#_\dataset(s, r)$.

    We require that the probability of the random variable being 1 is equal among all sampled state-observation pairs $(s',z')$ that are not ignored by it, and moreover we require this probability to be equal to $M(z\mid s) = M(\Rfunc(z) \mid s) \eqqcolon p$.
    To prove this, we use the assumption that $M$ is a simple MAR missingness function; thus, we know that for all $s'$ that agree with $s$ on all always observable features (formally: $\forall i \in I \colon (i\in\Ialw \implies z'_i=s_i)$) , we have $p = M(\Rfunc(z) \mid s) = M(\Rfunc(z) \mid s')$.

    We have just shown that the random variable we constructed is a Bernoulli process with success probability $p = M(z\mid s)$, with the number of repetitions $n = \sum_{r \in R} \#_\dataset(s, r)$ and the number of successes $k=\#_\dataset(s, \Rfunc(z'))$.
    Note that the definition of $\Mhat$ in \cref{eq:Mhat-AsMAR} is exactly the empirical success probability $\hat p = \frac k n$.

    Observe that we do not need a separate Bernoulli process for every state-observation pair:
    The number of repetitions $\sum_{r \in R} \#_\dataset(s, r)$ is independent of the observation $z$, since that only affects whether it is counted as success or not.
    Further, it suffices to have one random variable per combination of valuation for the features in $\Ialw$, since all states that agree on the always observable features yield the same Bernoulli process.
    Moreover, we do not need to consider every observation $z$ (as this includes observations that do not admit $s$), but rather only every missingness indicator vector $r \in R$.
    In the following, we still write \enquote{Every state-observation pair} instead of \enquote{Every pair of set of states that agree on the always observable features and missingness indicator vector}, as it is also true and more concise.

    \medskip
    \textbf{Single state-observation pair.}
    Consider the Bernoulli process just described for a fixed state-observation pair $(s,z)$.
    We explain how to use the results of \cref{app:algo-bernoulli} towards proving the first and second claim of the theorem:
    \begin{itemize}
        \item First claim: By the third variant of \cref{eq:okamoto}, we have that given a precision $\eps$ and confidence threshold $\delta_{s,z}$, we can compute a necessary number of samples $n_{s,z}$ such that we obtain the PAC guarantee for this state-observation pair. 
        \item Second claim: Observe that a given dataset $\dataset$ corresponds to a number of repetitions of every Bernoulli process. Let $n_{s,z}$ be the number of repetitions for the pair $(s,z)$. Thus, using the second variant of \cref{eq:okamoto}, we have that given $\dataset$ (and thus $n_{s,z}$) and a confidence threshold $\delta_{s,z}$, we can compute a precision $\eps_{s,z}$ such that we obtain the PAC guarantee for this state-observation pair.
    \end{itemize}

    \medskip
    \textbf{All state-observation pairs.}    
    We can split the given confidence threshold $\delta$ uniformly over all state-observation pairs, i.e.\ for every $s\in S$, $z\in Z$, we have $\delta_{s,z} = \frac{\delta}{\abs{S} \cdot \abs{Z}}$.
    Then, by the union bound, the probability of all state-observation pairs being correctly estimated is the sum of all $\delta_{s,z}$, which (since we distributed it uniformly) is $\delta$.
    By splitting the confidence threshold in this way, we can obtain the PAC guarantee for all state-observation pairs.
    
    \medskip
    \textbf{Second claim.}
    We first provide the full argument for the second claim, as it is simpler.
    Given the dataset $\dataset$ and confidence threshold $\delta$, we obtain an $\varepsilon_{s,z}$ for all state-observation pairs.
    The probability that all of these are correct is at least $\delta$.
    We obtain the claim by taking the maximum over these, i.e.\ setting $\eps \coloneqq \max_{s\in S, z\in Z} \varepsilon_{s,z}$.
    Then we have that with probability at least $\delta$, for all states $s\in S$ and observations $z\in Z$, we have $\abs{\Mhat(z \mid s) - M(z \mid s)} \leq \eps$.

    \medskip
    \textbf{First claim.}
    We proceed in two steps: 
    We explain the analogous argument to the second claim, based on an assumption on the dataset. 
    Afterwards, we explain how this assumption on the dataset can be satisfied.

    Assume that for every state-observation pair $(s,z)$, the dataset $\dataset$ contains at least $n_{s,z}$ samples, i.e.\ the number computed using \cref{eq:okamoto} inserting $\eps$ and $\delta_{s,z}$.
    Then, analogously to the proof of the second claim, computing $\Mhat$ using this dataset satisfies that with probability at least $\delta$, for all states $s\in S$ and observations $z\in Z$, we have $\abs{\Mhat(z \mid s) - M(z \mid s)} \leq \eps$.

    It remains to show that there exists a number $n^*$ such that a sampled dataset of $n^*$ histories has the required property.
    For this, we have to spend some of our confidence threshold $\delta$, since we can only guarantee the property with a certain probability; there is the chance that even upon sampling $n^*$ histories, we are unlucky and some state-observation pair has not been sampled often enough.
    Thus, we split $\delta$ as follows: $\delta_{\dataset}$ is used to guarantee the property of the dataset, and $\delta_{\Mhat}$ is used to guarantee the consequential property of $\Mhat$. 
    Thus, $\delta_{s,z}$ above are obtained by uniformly distributing $\delta_{\Mhat}$, not all of $\delta$.
    Then, by the union bound, the probability that $\dataset$ has the desired property and that the PAC guarantee holds is $\delta_{\dataset} + \delta_{\Mhat} = \delta$.

    We now need to show that there exists an $n^*$ such that a dataset of this size contains the required number of samples with probability at least $\delta_{\dataset}$.    
    Recall that the dataset is sampled using a fair policy, which means that every state has a positive probability to be visited; thus (assuming that the length of every history is at least as large as the number of states in the \missMDP), there exists a minimum probability $m$ such that every state is visited with at least probability $m$ in every history.
    Moreover, observe that for a state-observation pair $(s,z)$, the number of samples for its Bernoulli process is at least the number of times $s$ has been visited; this is because a sample is used when it agrees with $s$ on the always observable features. 
    Thus, for every sampled history, we have a probability of at least $m$ to obtain at least one sample for $(s,z)$.
    This lower bound on the number of samples for $(s,z)$ is binomially distributed with success probability $m$ \cite[Chapter 4.3]{dekking2005modern}.
    Thus, there exists a number of histories $n^*$ such that the probability of having at least $n_{s,z}$ samples for $(s,z)$ when sampling at least $n_h$ histories is greater than $\delta_{\dataset}$.
    As before, this argument was for a single state-observation pair; thus, $\delta_{\dataset}$ is also uniformly distributed over all state-observation pairs.

    Summarizing the above: There exists a number $n^*$, such that with probability $\delta_{\dataset}$, a dataset consisting of $n^*$ histories contains at least $n_{s,z}$ samples for every state-observation pair $(s,z)$, where $n_{s,z}$ is the number computed using \cref{eq:okamoto} inserting $\eps$ and $\delta_{s,z}$.
    Consequently, $\Mhat$ using this dataset satisfies that with probability at least $\delta_{\Mhat}$, for all states $s\in S$ and observations $z\in Z$, we have $\Mhat(z \mid s) = M(z \mid s) \pm \eps$.
    Together, we can guarantee that probably (with probability at least $\delta = \delta_{\dataset} + \delta_{\Mhat}$), $\Mhat$ is approximately correct.

\end{proof}

\begin{proposition}
    The improvements described in \cref{sec:algoritm-AMCAR-AsMAR} for using knowledge retain the PAC guarantees stated in \cref{thm:alg-1}.
\end{proposition}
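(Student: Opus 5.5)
The plan is to reuse the proof of \cref{thm:alg-1} almost verbatim. The only ingredient of that proof that depends on the choice of the index set $\Ihatalw$ is the construction of the Bernoulli process for a pair $(s,z)$. So it suffices to show that, under the additional knowledge, the coarser counting scheme still defines a Bernoulli process whose success probability is exactly $M(z\mid s)$. The union-bound distribution of $\delta$ then carries over unchanged, and so does the argument that a fair policy yields enough samples with probability $\delta_{\dataset}$.

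Concretely, let $J\subseteq\Ialw$ be the set of always-observed features that are kept after removing every feature $i$ known not to influence missingness. For MCAR, $J=\emptyset$. The first step is to state the knowledge as an invariance: for all $s,s'$ that agree on $J$, and all $r\in R$,
\[
\pr(\Rfunc(\rand z)=r\mid \rand z\sim M(s))=\pr(\Rfunc(\rand z')=r\mid \rand z'\sim M(s')).
\]
For simple MAR with an irrelevant feature this follows from simple MAR together with the assumption that $i$ does not affect the missingness probabilities. For MCAR it is the definition directly, since there $p_r$ does not depend on $s$.

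The second step defines the random variable exactly as in the proof of \cref{thm:alg-1}, with $\Ialw$ replaced by $J$. A sample $(s',z')$ is ignored unless $z'_i=s_i$ for all $i\in J$. It is a success if, in addition, $z'_i=\mis$ whenever $\Rfunc(z)_i=0$. By the invariance, every non-ignored sample has success probability $M(\Rfunc(z)\mid s)$. The success and trial counts coincide with the merged counters, so the modified $\Mhat$ is the empirical success probability. Okamoto's inequality (\cref{eq:okamoto}) and the union bound then give both claims, exactly as before.

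The third step redoes the sample-size argument. Each non-ignored set is a superset of the corresponding set for $\Ialw$, since fewer constraints are imposed. So each Bernoulli process receives at least as many repetitions as in the unmodified algorithm, and the same $n^*$ still suffices.

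The main obstacle lies in the first step. The paper phrases the success event through $Z_s^r$ and the features of $z'$, not of $s'$. I must therefore check that conditioning on the observed values $z'_i=s_i$ for $i\in J$ is the same as conditioning on the states $s'$ that agree with $s$ on $J$. This holds because features in $J\subseteq\Ialw$ are never missing, so $z'_i=s'_i$. I would also have to handle the case where $\Ihatalw\neq\Ialw$ through misclassification. I would argue that a misclassified feature only adds further conditioning on an observed value, which does not change the success probability.
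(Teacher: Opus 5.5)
Your core argument is the paper's own. The extra knowledge means that the Bernoulli processes for different valuations of a discarded feature share the same success probability, so merging them gives a valid Bernoulli process with at least as many trials. The Okamoto, union-bound and sample-size argument of \cref{thm:alg-1} then carries over, and your invariance formulation on $J$ and the monotonicity of the trial counts make explicit what the paper leaves implicit.

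The one step I would not accept as written is your last sentence on misclassification. Suppose a feature $i\notin\Ialw$ lands in $\Ihatalw$. Requiring $z'_i=s_i$ then also forces $\Rfunc(z')_i=1$. For a state $s'$ with $s'_i=s_i$, the conditional success probability becomes $M(r\mid s')\,\indicatorfunction_{r_i=1}/\pr(\rand z_i\neq\mis\mid \rand z\sim M(s'))$, which differs from $M(r\mid s')$ unless feature $i$ is in fact never missing. So the extra condition does change the success probability.

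The correct repair is different. First, $\Ialw\subseteq\Ihatalw$ always holds. For each fixed $K\supseteq\Ialw$, consider auxiliary counters that group samples by the hidden state values $s'_j=s_j$, $j\in K$, minus the features known to be irrelevant. Under simple MAR these are valid Bernoulli processes with success probability $M(z\mid s)$. On the event $\Ihatalw=K$, no feature in $K$ is ever missing in $\dataset$, so $z'_j=s'_j$ for every sample and the algorithm's counters coincide with the auxiliary ones. A union bound over the at most $2^{|I|}$ possible $K$ finishes the argument. The paper's proofs silently identify $\Ihatalw$ with $\Ialw$, so this is an addition you must supply yourself rather than cite.

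Relatedly, the success condition in your second step must be the biconditional $\Rfunc(z)_i=0\iff z'_i=\mis$, as in the \AIMI counters. With the one-sided implication copied from the definition of $Z_s^r$, the success event is $\Rfunc(z')\le\Rfunc(z)$ componentwise. Its probability is $\sum_{r'\le \Rfunc(z)}M(r'\mid s)$ rather than $M(z\mid s)$.
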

\begin{proof}
    The improvements use the fact that the underlying Bernoulli process in fact does not depend on all features in $\Ialw$. While it is correct to still split on these variables, obtaining two processes with the same true success probability, we can also merge them.

    More formally, observe that if feature $i$ does not affect the missingness probability of other features, for all valuations of feature $i$, the corresponding Bernoulli processes have the same success probability.
    MCAR missingness functions are the most extreme case of this, where the given state is completely irrelevant and it suffices to have one Bernoulli process per missingness indicator vector.
    As a side note: Observe that it is indeed necessary to consider every missingness indicator vector and not individual features, since the missingness probabilities need not be independent.
\end{proof}

\subsection[PAC guarantees for AIMI]{PAC guarantees for AIMI (\cref{sec:algorithm-AIMI})}\label{app:algo-alg2}

\algTwo*

\begin{proof}
    This proof is analogous to that of \cref{thm:alg-1}:
    every missingness probability computed by \cref{eq:Mhat-AIMI} corresponds to the empirical success probability of a Bernoulli process, which allows to apply the results from \cref{app:algo-bernoulli}.
    This proof differs in the argument why all states grouped together in the same Bernoulli process have the same success probability, and in the argument why it feasible to sample a dataset of the necessary size.

    By the independence assumption, we know that it suffices to learn every individual $\pr(\rand z_i \mid \rand z \sim M(s))$ for each $i\in I$.
    By non self-censoring, we know that this probability depends only on features in $I \setminus \{i\}$.
    Thus, the counter $\#(s,i,0)$ counts exactly the successes of a Bernoulli process with success probability $\pr(\rand z_i \mid \rand z \sim M(s))$, and $\#(s,i,1)$ counts the failures.

    It only remains to argue that a sufficient dataset can be feasibly obtained. 
    For this, we use the assumption that no feature is missing surely. 
    In other words, every feature has a positive probability to be observed. 
    Thus, every reachable states has a positive probability $m$ to be fully observed.
    Using this, we can repeat the argument from the proof of \cref{thm:alg-1}.
\end{proof}

\begin{proposition}
    The improvements described in \cref{sec:algoritm-AMCAR-AsMAR} for using knowledge retain the PAC guarantees stated in \cref{thm:alg-2}.
\end{proposition}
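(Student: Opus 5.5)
The plan is to reduce each of the three modifications listed in \cref{sec:algorithm-AIMI} (knowledge from (a) an m-graph, (b) simple MAR with feature $j$ observed to go missing, (c) MCAR) to one statement. Merging the counters $\#_\dataset(s,i,r_i)$ over all valuations of a feature $j$ combines Bernoulli processes that have the same success probability. The merged counter is then again a Bernoulli process, with at least as many repetitions. With that in hand, the proof of \cref{thm:alg-2}, which reuses the argument of \cref{thm:alg-1}, goes through unchanged.

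The first step is to recall from the proof of \cref{thm:alg-2} the process associated with a fixed state $s$ and feature $i$. A sample counts when it agrees with $s$ on all features in $I\setminus\{i\}$. It is a success when $z_i=\mis$, and its success probability is $q_i(s) := \pr(\rand z_i=\mis\mid \rand z\sim M(s))$.

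The second step is to show, in each case, that $q_i(s)=q_i(s')$ whenever $s,s'$ differ only in feature $j$ (and, for (c), for arbitrary $s,s'$).
\begin{itemize}
\item For (a), the absence of an edge from the $j$-th \Snode-node to the $i$-th \Rnode-node gives, by d-separation, $R_i \perp S_j$ given the remaining parents. Hence $q_i$ is constant in $s_j$.
\item For (b), observing $z_j=\mis$ in $\dataset$ shows $j\notin\Ialw$. Under simple MAR, $M(\cdot\mid s)$ depends only on $s|_{\Ialw}$, so its marginal $q_i$ does not depend on $s_j$.
\item For (c), MCAR makes the distribution over $R$ independent of $s$, hence so is the marginal $q_i$.
\end{itemize}

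The third step is to merge. The merged counter counts samples agreeing with $s$ on $I\setminus\{i,j\}$, and each such sample is an independent trial with the common probability $q_i(s)$. The merged ratio is therefore the empirical success probability of a Bernoulli process with success probability $q_i(s)$, and \cref{eq:okamoto} applies. The union bound over processes is unaffected, since merging only reduces the number of processes, so the split of $\delta$ can remain uniform. Iterating over several merged features follows by induction.

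Finally, the dataset-size argument carries over. Each merged process receives at least as many samples as any of its constituents, so the positivity-based lower bound $m$ and the binomial argument from \cref{thm:alg-1} still yield a suitable $n^*$. The dual claim follows by taking the maximum of the per-process $\eps$.

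The main obstacle is step (b). The case is sound only if $j$ is really missing, and it is: one observed $\mis$ certifies $j\notin\Ialw$, so no confidence is spent on it. The remaining point to verify is that the marginal $q_i$ inherits the invariance of the joint distribution over $R$, which is immediate by summing over the other indicators.
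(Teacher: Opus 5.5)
Your proposal is correct and follows the paper's route: in each of cases (a)--(c) you show that the per-feature Bernoulli processes for different valuations of feature $j$ share the same success probability. Merging them therefore preserves the argument of \cref{thm:alg-2}; you add details the paper leaves implicit (the d-separation and marginalization steps, that an observed $\mis$ certifies $j\notin\Ialw$ without spending confidence, and that merged processes only gain samples), but the structure is the same.
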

\begin{proof}
    (a) If we know from an m-graph that a particular feature $i$ is not influenced by feature $j$, for all valuations of $j$ the Bernoulli process has the same success probability. Thus, we can merge these Bernoulli processes and ignore feature $j$.
    
    (b) If we know the missingness function is simple MAR and feature $j$ goes missing, we know that it cannot influence the missingness probability of any other feature by definition~\citep{Mohan_graphical_2021}.
    Then, the proof is the same as in Case (a).

    (c) If the missingness function is MCAR, we know that no feature influences the missingness probability of any other feature. 
    Thus, we can repeatedly apply the argument of Case (a) to merge all Bernoulli processes until we have one for every feature. 
\end{proof}

\subsection[Computing epsilon-optimal policies]{Computing $\eps$-optimal policies (\cref{sec:alg-policy})}\label{app:algo-policies}

\policies*

\begin{proof}
    \textbf{Sampling the dataset.}
    We have sampling access with a fair policy, so every state has positive probability to be visited.
    Thus, for any finite number $n$, we can almost surely obtain $n$ samples of every state $s$ in finite time.
    For the Bernoulli process underlying \cref{eq:Mhat-AsMAR}, and if the missingness function is simple MAR, this suffices to guarantee that for every state-observation pair, we can obtain the number of samples $n_{s,z}$ required for achieving precision $\eps$ with confidence $\delta_{s,z}$.
    Similarly, for the Bernoulli process underlying \cref{eq:Mhat-AIMI}, and if the missingness function satisfies positivity, we can also obtain the required number of samples for every state-observation pair.
    Overall, under the assumptions of the theorem, we can almost surely obtain a dataset in finite time such that it suffices to give PAC guarantees on every state-observation pair.
    
    We remark that this does not even require spending confidence budget as we did in the proofs of \cref{thm:alg-1,thm:alg-2}, since there we required to get this dataset within a certain number of histories $n^*$.
    Here, we only claim that we can get a sufficient dataset in finite time almost surely.

    \textbf{Obtaining $\Mhat$.}
    The assumptions on the missingness function in the statement of the theorem match those in \cref{thm:alg-1} or \cref{thm:alg-2}.
    Hence, given the dataset described in the previous paragraph, we can approximate $\Mhat$ in a way such that with probability $\delta$, it is $\epsA$-precise.
    Note that here we do not employ the full allowed imprecision $\eps$, but rather a smaller $\epsA< \eps$, since there will be other sources of error.

    \textbf{$M$ and $\Mhat$ qualitatively agree.}
    For our technical reasoning, we require that $M(z\mid s) = 0$ if and only if $\Mhat(z\mid s) = 0$. 
    We prove both directions separately:
    If $M(z\mid s) = 0$, then we never observe a sample for $z$ when given $s$, and thus $\Mhat(z\mid s) = 0$, as it uses an empirical average (\cref{eq:Mhat-AIMI,eq:Mhat-AsMAR}).
    If $M(z\mid s) > 0$, as we use a fair sampling process, we almost surely eventually observe $z$ when given $s$, and consequently the empirical average is positive, i.e.\ $\Mhat(z\mid s) > 0$.
    
    It remains to prove that we can \emph{in finite time} conclude that $M$ and $\Mhat$ qualitatively agree.
    This means that we need to be sufficiently certain that if $\Mhat(z\mid s) = 0$, this is because indeed $M(z\mid s) = 0$ and not just because we haven't sampled enough yet.
    For this, we use a proof technique employed in, e.g., \cite{tacas16}:
    We utilize knowledge of (a lower bound on) the smallest missingness probability $p$.
    Further, recall that the confidence threshold $\delta$ is distributed over all Bernoulli processes (see \cref{app:algo-alg1,app:algo-alg2}).
    Thus, for each Bernoulli process, we have a confidence threshold $\delta_{s,z}$. 
    Okamoto's inequality (see \cref{app:algo-bernoulli}) provides an upper bound on the missingness probability that is correct with probability at least $\delta_{s,z}$.
    Thus, when this upper bound is less than $p$, we can conclude with sufficient confidence that $\Mhat(z\mid s) = 0$.

    \textbf{Utilizing \cref{lem:policy-val-diff}.}
    Let $\mdphat$ be the approximated missingness-MDP that is exactly $\mdp$ except for the missingness function, which is $\Mhat$ instead of $M$.
    We have just proven that in finite time we know that with probability $\delta$, $\Mhat$ is $\epsA$-precise and qualitatively agrees with $M$.
    Thus, it satisfies the assumptions specified in \cref{lem:policy-val-diff}, which is proven below.
    This key technical lemma shows that the values obtained when following a policy $\pi$ in either the original $\mdp$ or the approximated $\mdphat$ have a bounded difference.\footnote{We highlight that every policy is applicable in both missingness-MDPs, as they only differ in their missingness probabilities, but agree on states, observations, and actions.}
    Formally, for every policy $\pi$, we have $\abs{V_{\mdp}(\pi) -V_{\mdphat}(\pi)} \leq f(\epsA)$, where $f$ is a monotonically increasing function that depends on $\epsA$, the precision of $\Mhat$.
    
    From this, we obtain two facts: 
    Firstly, since this holds for all policies, it also holds for the supremum over all policies, and thus we can bound the difference in the values of the two missingness-MDPs:
    \begin{equation}
        \abs{\sup_{\pi} V_{\mdp}(\pi) - \sup_{\pi} V_{{\mdphat}}(\pi)} \leq f(\epsA).\label{eq:proof-policy-valdiff}
    \end{equation}
    Secondly, we can apply the same reasoning to a near-optimal policy in $\mdphat$. 
    For this, let $\epsPi < \eps$ be a precision smaller than our overall error tolerance, and let $\pi^*$ be an $\epsPi$-optimal policy in $\mdphat$, i.e.
    \begin{equation}
        \sup_{\pi} (V_{\mdphat}(\pi)) - V_{{\mdphat}}(\pi^*) \leq \epsPi.\label{eq:proof-policy-eps-opt}
    \end{equation}
    We remark that $\mdphat$ is a fully specified missingness-MDP, and thus a fully specified POMDP, for which solvers computing $\eps$-optimal policies such as SARSOP~\citep{kurniawati2008sarsop} exist.
    Using \cref{lem:policy-val-diff}, we obtain the following inequality:
    \begin{equation}
        \abs{V_{\mdp}(\pi^*) - V_{\mdphat}(\pi^*)} \leq f(\epsA).\label{eq:proof-policy-optdiff}
    \end{equation}

    \textbf{Implications of the inequalities.}
    Since we reason about absolute differences, we need to make case distinctions on whether $\sup_{\pi} V_{\mdp}(\pi) - \sup_{\pi} V_{{\mdphat}}(\pi) \geq 0$ or not when applying \cref{eq:proof-policy-valdiff}.
    If $\sup_{\pi} V_{\mdp}(\pi) - \sup_{\pi} V_{{\mdphat}}(\pi) \geq 0$, then $\sup_{\pi} V_{\mdp}(\pi) - \sup_{\pi} V_{{\mdphat}}(\pi) \leq f(\epsA)$, and by reordering we get $\sup_{\pi} V_{\mdp}(\pi) \leq \sup_{\pi} V_{{\mdphat}}(\pi) + f(\epsA)$.
    Otherwise, we have $\sup_{\pi} V_{\mdp}(\pi) < \sup_{\pi} V_{{\mdphat}}(\pi)$.
    Together, we can obtain that \cref{eq:proof-policy-valdiff} implies:
    \begin{equation}
        \sup_{\pi} V_{\mdp}(\pi) \leq \sup_{\pi} V_{{\mdphat}}(\pi) + f(\epsA) \label{eq:proof-policy-valdiff-mod}
    \end{equation}
    Analogously, we can make a case distinction in \cref{eq:proof-policy-optdiff} and obtain that:
    \begin{equation}
        V_{\mdphat}(\pi^*) \leq V_{\mdp}(\pi^*) + f(\epsA) \label{eq:proof-policy-optdiff-mod}
    \end{equation}

    \textbf{Combining the inequalities.}
    To conclude the proof, we use a chain of inequalities.
    \begin{align*}
        \sup_{\pi} V_{\mdp}(\pi) &\leq \sup_{\pi} V_{{\mdphat}}(\pi) + f(\epsA) \tag{By \cref{eq:proof-policy-valdiff-mod}}\\
        &\leq V_{{\mdphat}}(\pi^*) + \epsPi + f(\epsA) \tag{By \cref{eq:proof-policy-eps-opt}}\\
        &\leq V_{\mdp}(\pi^*) + f(\epsA) + \epsPi + f(\epsA) \tag{By \cref{eq:proof-policy-optdiff-mod}}
    \end{align*}
    By reordering, we obtain
    \[
        \abs{\sup_{\pi} V_{\mdp}(\pi) - V_{{\mdp}}(\pi^*)} \leq \epsPi + 2\cdot f(\epsA).
    \]
    Hence, since $f$ is a monotonically increasing function, there exists a choice of $\epsA$ and $\epsPi$ so that $\epsPi + 2\cdot f(\epsA)  \leq \eps$.
    Intuitively, while the errors incurred by approximating $\Mhat$ and by using an approximately optimal policy add up, we can bound the overall maximum error.
    Thus, we can choose the two precisions so that the overall error criterion is met, and the policy $\pi^*$ is $\eps$-optimal in the original missingness-MDP (with probability $\delta$; with the remaining probability, our sampling was unlucky and $\Mhat$ can differ by more than $\epsA$).
    This concludes the proof.
\end{proof}

\begin{lemma}[Bounding the Value-Difference between $\mdp$ and $\mdphat$]\label{lem:policy-val-diff}
    Let $\mdp$ be a missingness-MDP and $\mdphat$ be a missingness-MDP that differs from $\mdp$ only in its missingness function, where it uses $\Mhat$ instead of $M$.
    Further, assume that for all states $s\in S$ and observations $z\in Z$, we have $M(z\mid s) = 0$ if and only if $\Mhat(z\mid s) = 0$, and moreover $M(z\mid s) = \Mhat(z\mid s) \pm \epsA$.
    Then, for every policy $\pi$ we have 
    $\abs{V_{\mdp}(\pi) -V_{\mdphat}(\pi)} \leq f(\epsA)$, where $f$ is a monotonically increasing function.
\end{lemma}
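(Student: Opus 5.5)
The plan is to compare the two processes on the level of full histories rather than beliefs. A policy $\pi$ maps beliefs to action distributions. The beliefs in $\mdp$ and $\mdphat$ may differ for the same history, so first fix $\pi$ as a history-dependent policy $\sigma(h) := \pi(b_h)$. Here $b_h$ is the belief that $\pi$ would use, and $V_{\mdp}(\pi)$ and $V_{\mdphat}(\pi)$ are the values of $\sigma$ executed in either model. Because $M$ and $\Mhat$ have the same support, exactly the same observations are possible in both models. In particular, every belief update defined in one model is defined in the other, so $\sigma$ is well defined on all histories reachable in either model.

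\textbf{Step 1: truncation.} Rewards are bounded by $R_{\max} := \max_{s,a}\abs{\reward(s,a)}$. The tail beyond horizon $H$ therefore contributes at most $\gamma^H R_{\max}/(1-\gamma)$ in each model. Choose $H = H(\epsA)$ growing as $\epsA\to 0$, for instance $H=\lceil \log \epsA / \log\gamma\rceil$, so that the tail is at most $\epsA R_{\max}/(1-\gamma)$.

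\textbf{Step 2: coupling the finite-horizon trajectory distributions.} The process generates trajectories $(s^{(0)},z^{(0)},a^{(0)},\dots,s^{(H)},z^{(H)},a^{(H)})$. The initial distribution $\init$, the transitions $\T$, and the action choices $\sigma$ are shared by both models. The only difference is the factor $M(z^{(t)}\mid s^{(t)})$ versus $\Mhat(z^{(t)}\mid s^{(t)})$ at each step. I would bound the total variation between the two trajectory laws by a hybrid argument: swap $M$ for $\Mhat$ one time step at a time. Each swap changes the law by at most $\max_s \mathrm{TV}(M(s),\Mhat(s)) \le \tfrac12\abs{Z}\epsA$, since conditioning on the past leaves the remaining kernels identical. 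Summing over the steps gives a total variation of at most $\tfrac12 (H+1)\abs{Z}\epsA$.

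\textbf{Step 3: combining.} The discounted reward truncated at horizon $H$ has absolute value at most $R_{\max}/(1-\gamma)$. Its expectations under the two laws thus differ by at most $2\cdot\tfrac12(H+1)\abs{Z}\epsA\, R_{\max}/(1-\gamma)$. Adding the two tails yields
\[
f(\epsA) = \frac{R_{\max}}{1-\gamma}\bigl((H(\epsA)+1)\abs{Z}\epsA + 2\gamma^{H(\epsA)}\bigr).
\]
This bound does not depend on $\pi$, and it tends to $0$ as $\epsA\to 0$, since $\epsA\log(1/\epsA)\to 0$. To obtain a genuinely monotonically increasing $f$, replace it by $\sup_{\epsilon'\le\epsA}$ of the expression, or by a simpler monotone upper bound such as $C\sqrt{\epsA}$ for small $\epsA$. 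Either choice suffices for its use in the proof of \cref{thm:policies}.

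\textbf{Main obstacle.} The delicate step is Step 2. The policy acts on beliefs computed inside each model, not on raw histories, so one must make sure the same action kernel is applied in both processes. Reinterpreting $\pi$ via $\sigma$ requires fixing one belief-update convention; I would use the belief computed in the model in which $\pi$ is evaluated. If $\pi$ is then evaluated with model-specific beliefs, the hybrid argument does not directly apply, and the extra difference between $\pi(b_h)$ and $\pi(\hat b_h)$ would have to be controlled. The qualitative-agreement assumption on supports only guarantees that both updates are defined. I would first try to resolve this by defining $V_{\mdphat}(\pi)$ through the history-dependent $\sigma$, which is all that the inequality chain in \cref{thm:policies} requires.
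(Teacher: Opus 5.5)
Your proof is correct, and it takes a different route from the paper's.

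\textbf{The paper's route.} The paper passes to the belief MDPs $B$ and $\widehat{B}$, truncates them to finite MDPs, and invokes an external continuity lemma for constant-support robust MDPs. That is why it needs $M$ and $\Mhat$ to agree on supports. It is also why \cref{thm:policies} needs a known lower bound on the smallest missingness probability. The form of the resulting bound $g(\epsA)$ is deferred to the cited work.

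\textbf{What your route buys.} Your hybrid (simulation-lemma) argument on trajectory laws is elementary and self-contained, and it yields an explicit $f$. It uses the support assumption only to make $\sigma$ well defined; extending $\sigma$ arbitrarily on null histories would remove even that. It also shows $f(\epsA)\to 0$, which is what \cref{thm:policies} actually uses. You can avoid both the truncation and the monotonicity repair: bound the step-$t$ reward via the total variation of the length-$(t+1)$ prefix, which gives $f(\epsA)=\abs{Z}\epsA R_{\max}/(1-\gamma)^2$.

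\textbf{The obstacle you flag.} It is genuine, and your resolution is the right one. With model-specific beliefs, a policy that is discontinuous in the belief can act completely differently on $b_h$ and $\hat b_h$, however small $\epsA$ is. So the literal inequality can fail. The paper sidesteps this by treating $B$ and $\widehat{B}$ as sharing their states. That is justified only when belief updates do not depend on the missingness probabilities, e.g.\ when both $M$ and $\Mhat$ are MAR, by \cref{thm:ignorability}. It is not justified for the MNAR functions handled by \AIMI. Your history-dependent reading suffices for \cref{thm:policies}, since history-dependent and belief-based policies have the same supremum in a POMDP. It also certifies the policy that is actually deployed, namely $\pi^*$ run on $\Mhat$-beliefs.
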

\begin{proof}
    \textbf{To uncountable MDPs.}
    Note that both $\mdp$ and $\mdphat$ are missingness-MDPs, and thus POMDPs.
    Thus, for each of them, we can construct an uncountable belief MDP with the same value, called $B$ or $\widehat{B}$, respectively.
    Intuitively, this is achieved by unrolling step-by-step the observation function and all possible beliefs that the agent can have after an action; the transition probabilities in these uncountable MDPs depend on the missingness functions.
    For a more extensive description, see \mbox{\cite[Chapter 16.4.1]{AIMA}}.

    \textbf{To finite MDPs.}
    We consider discounted expected reward, with $\gamma$ the discount factor and $\reward_{\max} \coloneqq \max_{(s,a)\in S\times A}\reward(s,a)$ the maximum state reward.
    As the expected reward is a geometric series, we can bound the reward that can be obtained after $n$ steps from above as follows:
    \[
        \sum_{i=n}^\infty \gamma^i \cdot \reward_{\max} = \gamma^n \cdot \reward_{\max} \cdot \sum_{i=0}^\infty \gamma^i = \frac{\gamma^n \cdot \reward_{\max}}{1-\gamma}.
    \]
    For every arbitrarily small precision $\epsG>0$, we can thus obtain an $n$ such that the reward after $n$ steps is less than $\epsG$.
    Let $B_{\epsG}$ be the finite MDP obtained from $B$ by only considering states that are reachable within $n$ steps, and analogously define $\widehat{B}_{\epsG}$.
    (Note that $n$ is the same for both, since it only depends on $\gamma$ and $\reward_{\max}$, which is the same for both of them.)
    The value of these finite belief MDPs differs from the value of the uncountable belief MDPs and thus the original missingness-MDPs by at most $\epsG$.

    \begin{figure*}[t]
    \centering
    \begin{subfigure}[b]{0.45\textwidth}
        \includegraphics[width=\linewidth]{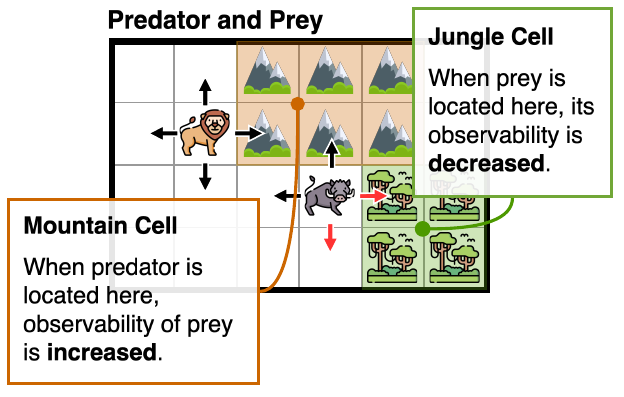}
        \label{fig:predator-prey-visualization}
    \end{subfigure}
    \hfill
    \begin{subfigure}[b]{0.35\textwidth}
        \includegraphics[width=\linewidth]{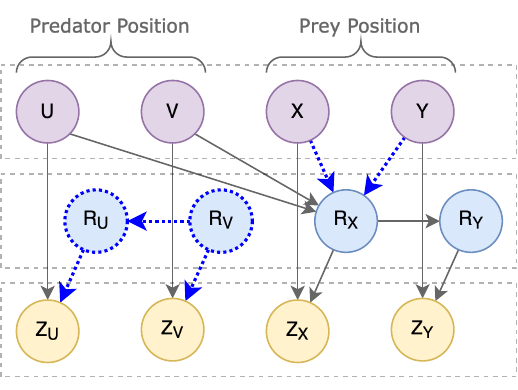}
        \label{fig:predator-prey-m-graph}
    \end{subfigure}
    \hfill
    \caption{\textbf{left:} The \Predator benchmark, where the predator (lion) is the agent trying to catch its prey (boar). Predator and prey can move in all four cardinal directions, where prey chooses an action that increases the distance to the predator (red arrows). \textbf{right:} The m-graphs for the predator and prey benchmark describing missingness functions of types \emph{simple MAR} (\textcolor{gray}{gray}), \emph{identifiable MNAR} (\textcolor{gray}{gray} + \textcolor{blue}{blue}). Causal dependencies between the state features were omitted for clarity.}
    \label{fig:predator-prey-ap}
    
    \end{figure*}

    \textbf{Bounding the difference.}
    Recall that $B$ or $\widehat{B}$ are the same except for their transition functions, which depend on $M$ and $\Mhat$, respectively.
    Still, by assumption of the theorem $M$ and $\Mhat$ qualitatively agree, i.e.\ $M(z\mid s) = 0$ if and only if $\Mhat(z\mid s) = 0$.
    Hence, the graph structure of $B$ or $\widehat{B}$ is the same. 
    Thus, the only difference are small perturbations of individual transition probabilities by at most $\epsA$.
    
    It remains to show the following: Given two finite MDPs that are the same except for small perturbations of the transition probabilities, but where the supports of the the transition functions are the same, provide a bound on the difference in their value.
    Such a result exists in the literature, namely in~\cite{MWW25}, or more precisely in the extended version of that paper~\cite[Lemma 5]{MWW25arxiv}.
    It remains to show that our setting indeed satisfies the assumptions of \cite[Lemma 5]{MWW25arxiv}.

    \begin{itemize}[leftmargin=20pt, topsep=0pt, itemsep=0pt]
        \item \enquote{For every closed constant-support RMDP}: Their claim applies to robust MDPs that are closed constant-support. 
        A robust MDP is an MDP whose transitions are not probability distributions, but rather sets of possible values, see ~\cite[Section 2]{MWW25}. 
        In our case, instead of considering the concrete MDPs $B_{\epsG}$ and $\widehat{B}_{\epsG}$, we consider the robust MDP that arises when considering an $\epsA$-interval around every missingness probability $M(z\mid s)$.
        This robust MDP contains both $B_{\epsG}$ and $\widehat{B}_{\epsG}$ as instantiations.
        \item \enquote{For every pair of agent and environment policy}: An agent policy in this setting is exactly the agent policy in ours, so ~\cite[Lemma 5]{MWW25arxiv} applies to all policies.
        An environment policy is the policy that chooses the instantiation of the transition function, i.e.\ the exact missingness probabilities from the set of all that differ by at most $\epsA$ in our setting.
        \item \enquote{Total-reward objectives:} \cite[Lemma 5]{MWW25arxiv} concerns \emph{undiscounted} total-reward or mean payoff objectives. Undiscounted total-reward generalizes discounted expected reward, using the standard construction which adds an edge transitioning with probability $\gamma$ to a dedicated sink state to every transition. Thus, the lemma is applicable to the objective in our setting.
        \item \enquote{The value function is continuous w.r.t. the environment policy}: This is the claim of~\cite[Lemma 5]{MWW25arxiv}. More formally, if the environment chooses missingness probabilities differently with some deviation $\epsA$, then the deviation in the value between the two instantiations is bounded by some monotonically increasing function $g(\epsA)$.
        This is exactly the claim we require, since it means that for all agent policies $\pi$ and all missingness functions $\Mhat$ that are $\epsA$-close to $M$, we have $\abs{V_{B_{\epsG}}(\pi) -V_{\widehat{B}_{\epsG}}(\pi)} \leq g(\epsA)$.
        
        We also argue that $g$ can be effectively computed, as it depends on the size of the state space, the reward function, and the minimum occurring transition probability, all of which are known to us (recall that \cref{thm:policies} assumes knowledge of a lower bound on the minimum missingness probabilities).
        The concrete way of deriving the distance is provided on~\cite[page 17]{MWW25arxiv}.
    \end{itemize}

    \textbf{Putting it all together.}
    Our goal is to show that we can compute an $f$ such that for all policies $\pi$ we have: $\abs{V_{\mdp}(\pi) -V_{\mdphat}(\pi)} \leq f(\epsA)$.
    The following chain of equations proves our goal:
    \begin{align*}
        \abs{V_{\mdp}(\pi) &- V_{\mdphat}(\pi)} = 
        \abs{V_{B}(\pi) -V_{\widehat{B}}(\pi)}\\ 
        &\text{(Using the uncountable belief MDPs)}\\
        &\leq \abs{B_{\epsG}(\pi) -V_{\widehat{B}_{\epsG}}(\pi)} + \epsG\\ 
        &\text{(Using the finite MDPs; decreasing both values by}\\
        &\text{at most $\epsG$ increases the difference by at most $\epsG$)}\\
        &\leq g(\epsA) + \epsG\\ 
        &\text{(By bounding the difference).}
    \end{align*}

    For simplicity of presentation, we choose $\epsG = \epsA$, and thus setting $f(\epsA) \coloneqq g(\epsA) + \epsA$ concludes the proof.    
\end{proof}

\section{Benchmarks and Experiments}
\label{app:benchmarks}

Here we describe our benchmarks. 
A benchmark corresponds to an environment with a \emph{particular} missingness function.
We provide a detailed description of the benchmarks as well as the parameters for running the experiments.

\begin{figure}[tbh]
    \centering
    \includegraphics[width=0.7\linewidth]{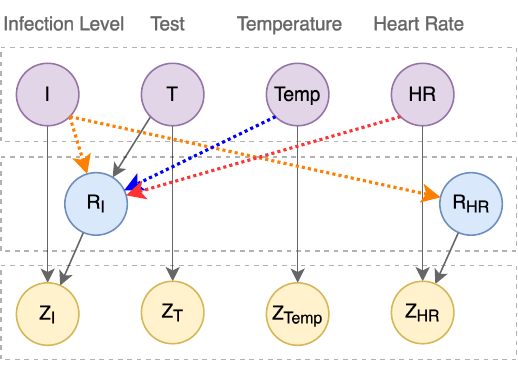}
    \caption{The m-graphs for the ICU benchmark describing missingness functions of types \emph{simple MAR} (\textcolor{gray}{gray} + \textcolor{blue}{blue}), \emph{identifiable MNAR} (\textcolor{gray}{gray} + \textcolor{red}{red}) and \emph{unidentifiable MNAR} (\textcolor{gray}{gray} + \textcolor{red}{red} + \textcolor{orange}{orange}). Causal dependencies between the state features were omitted for clarity.}
    \label{fig:ICU-m-graph}
\end{figure}

\begin{table*}[tb]
\centering
\caption{Hyper parameter distributions for the random search for PPO policies.}
\label{tab:parameter-search}
\begin{tabular}{@{}llcc@{}}
\toprule
\textbf{Category}  & \textbf{Hyperparameter}  & \textbf{Search space}     & \textbf{Sampling} \\ \midrule
Architecture       & Number of layers         & \{2, 3, 4, 5\}            & Discrete uniform  \\
Architecture       & Hidden units per layer   & \{32, 64, 128, 256, 512\} & Discrete uniform  \\
PPO                & Clip range ($\epsilon$)           & {[}0.10, 0.30{]}          & Uniform           \\
PPO                & Entropy coefficient ($\lambda_\epsilon$) & {[}1e-4, 5e-2{]}          & Log-uniform       \\
PPO                & Target KL divergence     & {[}5e-3, 3e-2{]}          & Log-uniform       \\
Optimizer (actor)  & Learning rate            & {[}1e-5, 3e-4{]}          & Log-uniform       \\
Optimizer (critic) & Learning rate            & {[}1e-4, 3e-3{]}          & Log-uniform       \\
Optimization       & Minibatch size           & \{32, 64, 128, 256\}      & Discrete uniform  \\
Optimization       & Epochs per update        & \{5, 10, 20\}             & Discrete uniform  \\ \bottomrule
\end{tabular}
\end{table*}

\subsection{Description}

\begin{table}[tb]
\centering
\caption{Sizes of the feature dimensions of each environment. Sizes for the MCAR missingness function in the Predator environment in parenthesis. }
\label{tab:model-feature-sizes}
\begin{tabular}{@{}ll|ll@{}}
\toprule
\multicolumn{2}{c|}{\textbf{ICU}} & \multicolumn{2}{c}{\textbf{Predator}} \\
Feature             & Size        & Feature            & Size             \\ \midrule
Infection           & 4           & u                  & 5 (10)           \\
Test                & 5           & v                  & 5 (5)            \\
Temperature         & 4           & x                  & 5 (10)           \\
Hear Rate           & 10          & y                  & 5 (5)            \\ \bottomrule
\end{tabular}
\end{table}

\paragraph{\ICU.}
This environment, inspired by prior clinical decision-making models~\citep{johnson_alistair_mimic-iv_2022, pollard_eicu_2018, thoral_sharing_2021, hyland_early_2020}, simulates a doctor treating a patient with an infection that progresses stochastically over time.  
The state of the patient consists of the \emph{infection severity}, the \emph{temperature}, and the \emph{heart rate}.
The infection causally influences both the heart rate and the temperature.

The doctor has an option to wait, to administer costly antibiotics that reduce the infection severity, or to order a test, which is a measuring action that may reveal the infection severity.
The reward function penalizes high infection levels as well as costly interventions (ordering a test and administering antibiotics).
Thus, the doctor's objective is to maintain the patient's infection severity at low levels by administering antibiotics only when necessary.
For ease of modeling, the state space also includes the value of the last test ordered.

We evaluate three different missingness functions $M$, corresponding to distinct missingness functions, illustrated in the m-graph in \cref{fig:ICU-m-graph}.  
In all cases, the heart rate and the infection severity may be missing, whereas temperature and the last test ordered are always observed.
The success rate of the test that reveals the infection severity may depend on different features, resulting in the following missingness functions.
\textbf{(1)} \textbf{Simple MAR}, where the success rate only depends on the (always observed) temperature.
\textbf{(2)} \textbf{MNAR (id.)}, where the success rate only depends on the (not always observed) heart rate, resulting in an identifiable MNAR function without self-censoring and satisfying the positivity assumption.
\textbf{(3)} \textbf{MNAR (unid.)} is an extension of \textbf{MNAR (id.)}, where the infection severity influences the test success rate, introducing self-censoring and thus making the function unidentifiable.

The ICU environment has 800 reachable states and 1650 observations for all missingness functions. More details on feature dimensions are given in \cref{tab:model-feature-sizes}.

\begin{figure*}[tbh]
    \centering
    \includegraphics[width=0.75\linewidth]{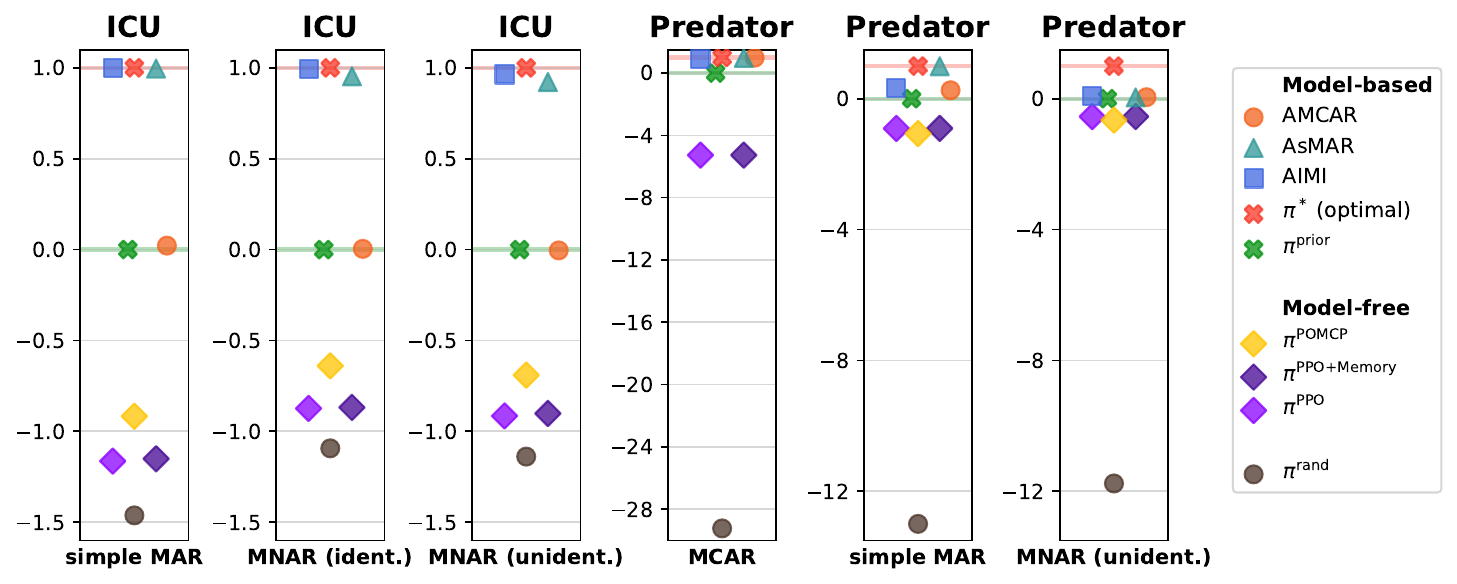}
    \caption{Baseline comparison plot for all benchmarks.}
    \label{fig:baseline-comparison-all}
\end{figure*}

\paragraph{\Predator.}
This environment is a variant of the \emph{Tag} benchmark from~\cite{DBLP:conf/ijcai/PineauGT03}, where an agent (in our case, a predator) is tasked with chasing a partially hidden target (a prey) in a 2D grid environment. 
The prey senses the predator and usually moves away from it; in case multiple directions lead away from the predator, the prey chooses uniformly at random. 
The predator's movement is deterministic (dictated by the policy), but moving in an intended direction may randomly fail due to terrain conditions. 
Predator obtains a flat reward upon catching the prey, and thus the discounting incentivizes catching the prey as soon as possible. 

The environment may feature three distinct \emph{biomes} -- plains, mountains, or jungles -- that influence the predator's observability of the prey, see \cref{fig:predator-prey-ap}, and thus define the missingness function. 
We investigate the following three variants thereof.
\textbf{(1) MCAR}, which features only one type of terrain, i.e., the prey is observed with constant probability.
\textbf{(2) simple MAR}, where the environment features plains as well as mountains from which the predator has a higher chance of observing its target. 
\textbf{(3) MNAR (unid.)}, where the prey has an option to hide in jungle cells, introducing self-censoring of its position.
We stress that when the predator loses track of the prey, both features corresponding to $x$ and $y$ coordinates of the prey go missing simultaneously, modeled by dependencies between missingness indicators $R_x$ and $R_y$.
The dependence between the missingness indicators is one key difference from the ICU benchmark together with lower stochasticity in the transition function.

The Predator environment has 626 reachable states and 1311 observations for the simple MAR and MNAR (unid.) missingness functions, including one initial state. For the MCAR missingness function we used a larger map resulting in a model with 1765 reachable states and 3379 observations. More details on feature dimensions are given in \cref{tab:model-feature-sizes}.

\subsection{Experimental setup}\label{app:setup}

\paragraph{Technical Setup.}
For SARSOP and POMCP, we used high-performance workstations equipped with an AMD Ryzen ThreadRipper PRO 5965WX (24-core, 3.8GHz) CPU, 512 GB ECC DDR4 RAM,
and a 2 TB PCIe 4.0 NVMe SSD.
The PPO benchmarks were obtained on a Macbook Pro, with an M3 chip with 11 cores, 18 GB LPDDR5 RAM.

\paragraph{Simulating trajectories.}
For both benchmarks, we used a discount factor of $\gamma = 0.95$.
We considered dataset sizes $\datasetsize \in \{10,50,100,500,10^3,5 \cdot 10^3,10^4,10^5,10^6,10^7\}$.
To obtain a dataset containing \datasetsize samples, we simulated finite trajectories until their lengths summed up to \datasetsize.
A trajectory is terminated when it reaches a terminal state (only for the \Predator bechmark, when the predator catches the prey) or if its length exceeds $L = \left\lceil \log_{\gamma}\frac{(1-\gamma) \cdot 10^{-3}}{\reward_{\max}} \right\rceil$, where $\reward_{\max} \coloneqq \max_{s,a}\reward(s,a)$. Here, $L$ denotes the smallest integer satisfying $\sum_{k=L}^\infty \gamma^k \cdot \reward_{\max} < 10^{-3}$, i.e. a time step after which the maximum discounted cumulative reward cannot exceed $10^{-3}$.
For each dataset size \datasetsize, we generated 20 independent datasets of this size.

\paragraph{Timeouts \& precision.}
For the baselines, we used the timeout of 5 minutes when solving the POMDP (to obtain \piopt and \piguess) and the same timeout to evaluate the resulting policy (or \pirand).
To obtain a policy \pihat by solving the corresponding $\MDPhat$, we used a timeout of 3 minutes and evaluated \pihat for 2 minutes. In all cases, solving was additionally allowed to terminate upon reaching the relative precision of $10^{-3}$.

\subsection{PPO Baseline}

The baseline policies for Proximal Policy Optimization (PPO) \citep{DBLP:journals/corr/SchulmanWDRK17} were computed using the implementation of the \texttt{Crux.jl} framework (Version 0.1.3) in Julia (Version 1.11.5).

\paragraph{Frame Stacking.}
In order to provide \emph{memory} to the PPO policy, 
the last $n_{\text{frames}}$ observations and actions were concatenated.
At an episode start the missing frames in the history were padded with POMDP's symbol for missing values.
The number of frames was determined by $n_\text{frames} = \left\lceil \log_\gamma(\varepsilon(1-\gamma))\right\rceil$, where $\gamma$ is the respective \missMDP's discount factor and $\varepsilon$ is the same precision used for evaluating policies computed via SARSOP.
With a history of $n_\text{frames}$ the policy had access to a history sufficient for accurate evaluation, as stated in the next paragraph.  
For ICU and Predator the frames were set to $n_\text{frames}=148$.

\paragraph{Training and Evaluation.}
All PPO policies were trained for $n_\text{steps}= 10^6$ steps with individual seeds using the Adam optimizer.
The policies were evaluated using cumulative discounted return, identical to the evaluation of the other benchmarks.
During evaluation actions were sampled from PPO's action distribution.
The mean performance was computed over $n_\text{eval}= 2000$ episodes.
The evaluation horizon per episode was set to $n_\text{frames}$ (see above).

\paragraph{Parameter Search and Model Selection.}
We performed 20 random searches for the hyper parameters, for each benchmark and corresponding missingness function (20 searches for each \ICUSimpleMAR, \ICUMNARid, \ICUMNARunid, \ldots). 
The policy and value networks always used the same architectural hyper parameters.
We report the performance of the best model for each search with respect to the cumulative discounted reward.
\cref{tab:parameter-search} provides the parameter distributions of the random search.
The hyper parameters for all runs on each benchmark are provided in our repository.

\subsection{POMCP Baseline}

The baseline policies for Partially Observable Monte Carlo Planning (POMCP) \citep{silver2010monte} were computed using the \texttt{BasicPOMCP.jl} framework (Version 0.3.12) in Julia (Version 1.11.5), which implements the PO-UCT online tree search algorithm together with particle filters (resulting in POMCP) for POMDPs.

\paragraph{Planning Setup.}
POMCP was used as an online, planning baseline with access to the transition and reward functions of the underlying \missMDP.
Belief updates were performed via an external belief updater, as \texttt{BasicPOMCP.jl} does not reuse particles from planning simulations for belief propagation.
In fact, in order to make POMCP competitive against the model-based SARSOP policies, it was provided the \emph{true} belief whenever its own belief collapsed due to an unseen observation.
At each decision step, planning was performed from scratch based on the current belief.

\paragraph{Solver Configuration.}
Leaf values were estimated using rollout-based evaluation with a random rollout policy.
Unless stated otherwise, we used a maximum planning depth of 20, the default exploration constant of $c=1.0$, and $1000$ tree queries per action.
Random seeds were fixed to ensure reproducibility.

\paragraph{Evaluation Protocol.}
POMCP policies were evaluated using cumulative discounted return, consistent with all other baselines.
During evaluation, the planner was executed online and selected actions based on the current belief.
Results were averaged over $n_\text{eval}=2000$ episodes, using the same discount factors and episode horizons as in the remaining benchmarks.

\subsection{Additional Results}

\begin{figure}[t]
  \centering
  \begin{minipage}[c]{0.65\linewidth}
    \centering
    \includegraphics[width=\linewidth]{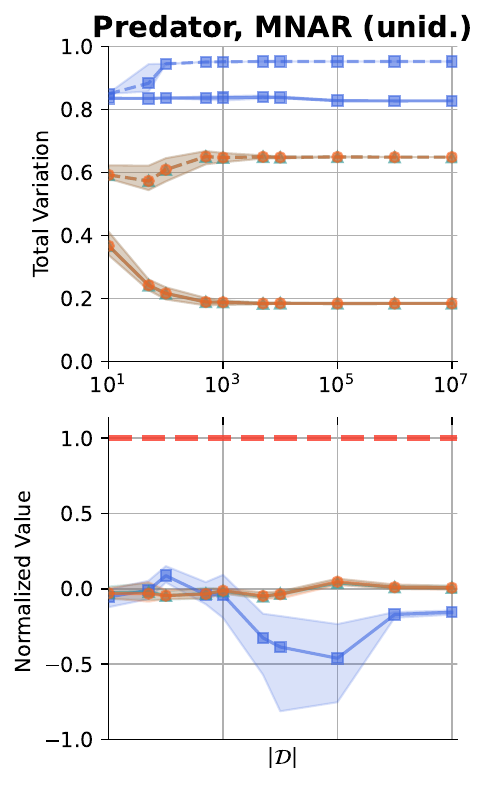}
  \end{minipage}\hfill
  \begin{minipage}[c]{0.34\linewidth}
    \centering
    \raisebox{0pt}[\height][\depth]{%
      \includegraphics[width=\linewidth]{legend_vertical.pdf}%
    }
  \end{minipage}
  \caption{Empirical results for the \PredatorMNARunid benchmark.}
  \label{fig:predator-mnar-unid}
\end{figure}

\paragraph{}
We provide results for an additional \PredatorMNARunid benchmark in \cref{fig:predator-mnar-unid}.
Its missingness function breaks the assumptions of our algorithms \AMCAR, \AsMAR and \AIMI.
Therefore, all of the approximations of the missingness functions $\Mhat$ converge to a non-zero total variation.
As a result, the policies computed by SARSOP using $\Mhat$ do not converge to the optimal policy (\piopt).
Instead all policies perform similarly to \piguess, the policy initialized with an uninformed prior on $M$ and corresponds to the normalized value of zero.

In \cref{fig:baseline-comparison-all} we provide the comparison with baselines for all benchmarks, adding to \cref{fig:baseline-comparison}.

\FloatBarrier

\end{document}